
\documentclass[a4paper,fleqn]{cas-sc}

\usepackage[authoryear,longnamesfirst]{natbib}

\def\tsc#1{\csdef{#1}{\textsc{\lowercase{#1}}\xspace}}
\tsc{WGM}
\tsc{QE}
\tsc{EP}
\tsc{PMS}
\tsc{BEC}
\tsc{DE}

\DeclareUnicodeCharacter{0301}{\'{e}}
\UseRawInputEncoding

\usepackage[utf8]{inputenc}
\usepackage[T1]{fontenc}
\usepackage{textcomp}
\usepackage{algorithm}
\usepackage{algpseudocode}
\usepackage{amsmath}

\usepackage{amssymb}
\usepackage{apalike}
\usepackage{longtable}
\usepackage{array}
\usepackage{colortbl}
\usepackage{amsmath}
\usepackage{graphicx}
\usepackage[inline]{enumitem}
\usepackage{ragged2e}
\usepackage{threeparttable}
\hypersetup{
colorlinks   = true,
urlcolor     = purple,
linkcolor    = purple,
citecolor   = purple}
\makeatletter
    \g@addto@macro{\UrlBreaks}{\do\/\do\-\do\_} 
\makeatother
\usepackage[ocgcolorlinks]{ocgx2}
\usepackage{multirow}
\usepackage{float}
\usepackage{placeins}
\usepackage{afterpage}
\usepackage{appendix}
\usepackage{enumitem}
\usepackage{subcaption} 
\usepackage[most]{tcolorbox}
\colorlet{shadecolor}{yellow}
\usepackage{mdwmath}
\usepackage{mdwtab}
\usepackage{eqparbox}
\usepackage{booktabs}
\usepackage{multirow}
\usepackage{numprint}
\definecolor{lightblue}{rgb}{0.92, 0.95, 1}
\definecolor{lightred}{rgb}{1, 0.9, 0.9}
\definecolor{deepblue}{rgb}{0, 0.4470, 0.7410}
\definecolor{deepyellow}{rgb}{0.9290, 0.6940, 0.1250}
\definecolor{deepgreen}{rgb}{0,0.5,0}
\definecolor{mygreen}{rgb}{0.01, 0.5, 0.01}
\definecolor{myred}{rgb}{0.8, 0.01, 0.01}
\usepackage{longtable}
\usepackage{url}
\usepackage{soul}





\newdefinition{rmk}{Remark}
\newtheorem{proof}{Proof}
\newproof{intuition}{intuition}
\newtheorem{proposition}{Proposition}
\newproof{pot}{Proof of Theorem \ref{thm}}

\newtheorem{definition}{Definition}
\newtheorem{remark}{Remark}

\begin{document}
\let\WriteBookmarks\relax
\def\floatpagepagefraction{1}
\def\textpagefraction{.001}

\shorttitle{Quantum-Informed Contrastive Learning with Dynamic Mixup Augmentation}

\shortauthors{Md Abrar Jahin et~al.}

\title [mode = title]{Quantum-Informed Contrastive Learning with Dynamic Mixup Augmentation for Class-Imbalanced Expert Systems}

%

\author[1]{Md Abrar Jahin}[orcid=0000-0002-1623-3859]
\ead{abrar.jahin.2652@gmail.com, jahin@usc.edu}
\cormark[1]
\credit{Conceptualization, Data curation, Formal analysis, Investigation, Methodology, Software, Writing -- original draft, Visualization}

\author[2]{Adiba Abid}[orcid=0009-0004-7443-4669]
\ead{abid1911037@stud.kuet.ac.bd, adibaabid55@gmail.com}
\credit{Data curation, Formal analysis, Methodology, Software, Writing -- original draft, Visualization}


\author[3]{M. F. Mridha}[orcid=0000-0001-5738-1631]
\ead{firoz.mridha@aiub.edu}
\cormark[1]
\credit{Supervision, Validation, Writing -- review \& editing}

\affiliation[1]{organization={Thomas Lord Department of Computer Science, Viterbi School of Engineering, University of Southern California},
    city={Los Angeles},
    state={CA},
    postcode={90089},
    country={USA}}

\affiliation[2]{organization={Department of Industrial Engineering and Management, Khulna University of Engineering \& Technology (KUET)},
    city={Khulna},
    postcode={9203},
    country={Bangladesh}}

\affiliation[3]{organization={Department of Computer Science, American International University-Bangladesh (AIUB)},
    city={Dhaka},
    postcode={1229},
    country={Bangladesh}}

\cortext[1]{Corresponding author(s)}

\begin{abstract}
Expert systems often operate in domains characterized by class-imbalanced tabular data, where detecting rare but critical instances is essential for safety and reliability. While conventional approaches, such as cost-sensitive learning, oversampling, and graph neural networks, provide partial solutions, they suffer from drawbacks like overfitting, label noise, and poor generalization in low-density regions. To address these challenges, we propose QCL-MixNet, a novel Quantum-Informed Contrastive Learning framework augmented with k-nearest neighbor (kNN) guided dynamic mixup for robust classification under imbalance. QCL-MixNet integrates three core innovations: (i) a Quantum Entanglement-inspired layer that models complex feature interactions through sinusoidal transformations and gated attention, (ii) a sample-aware mixup strategy that adaptively interpolates feature representations of semantically similar instances to enhance minority class representation, and (iii) a hybrid loss function that unifies focal reweighting, supervised contrastive learning, triplet margin loss, and variance regularization to improve both intra-class compactness and inter-class separability. Extensive experiments on 18 real-world imbalanced datasets (binary and multi-class) demonstrate that QCL-MixNet consistently outperforms 20 state-of-the-art machine learning, deep learning, and GNN-based baselines in macro-F1 and recall, often by substantial margins. Ablation studies further validate the critical role of each architectural component. Our results establish QCL-MixNet as a new benchmark for tabular imbalance handling in expert systems. Theoretical analyses reinforce its expressiveness, generalization, and optimization robustness.
\end{abstract}

\begin{keywords}
Quantum-Inspired Neural Networks \sep Contrastive Learning \sep Class Imbalance \sep Dynamic Mixup Augmentation \sep Expert Systems \sep Tabular Data Classification
\end{keywords}

\maketitle

\section{Introduction}
\label{sec:introduction}

Expert systems (ES), AI-based decision-making frameworks, are widely used in critical areas such as medical diagnosis, fraud detection, manufacturing, cyber security, and risk analysis, where decisions must be accurate for safety and reliability \citep{shu-hsien_liao_expert_2005}. A major challenge in ES for critical applications is that the data are naturally highly imbalanced \citep{yang_rethinking_2020,wei_effective_2013,rao_data_2006}. This is a common and challenging issue to solve \citep{branco_survey_2016}. In addition to binary classification, class imbalance becomes more severe in multi-class classification problems \citep{krawczyk_learning_2016}. Though traditional deep learning (DL) has recently advanced machine learning (ML) by partly overcoming the knowledge bottleneck that limited ML and AI for decades, it remains highly sensitive to imbalanced data distribution \citep{ghosh_class_2024,huang_deep_2020,bugnon_deep_2020,DBLP:conf/icml/YangZXK19,he_deep_2016,collobert_unified_2008,ando_deep_2017,buda_systematic_2018}. In addition, performance worsens significantly with the increasing imbalance ratio \citep{pulgar_impact_2017}. Moreover, failing to detect or predict rare but critical cases can cause higher costs, serious consequences, or sometimes irreparable damage. For example, failure to detect rare invalid transactions can result in significant financial loss and loss of customer trust. Missing early-stage cancer in a few patients (false negatives) may reduce survival chances. Missing signs of rare machine faults, like a turbine blade crack, can cause equipment failure or downtime. These issues suggest to suggest to handle the class imbalance issue carefully.

While established data-level techniques \citep{batista_study_2004,barandela_strategies_2003}, including oversampling and undersampling, along with algorithm-level methods such as cost-sensitive learning (CSL), offer foundational strategies for addressing class imbalance, each comes with trade-offs. Data-level techniques are flexible and widely used since they do not depend on the choice of classifier \citep{lopez_insight_2013}. Undersampling \citep{devi_review_2020} helps reduce bias toward the majority class but may not be suitable for small datasets. On the other hand, oversampling \citep{sharma_review_2022} increases the risk of overfitting. Similarly, CSL \citep{araf_cost-sensitive_2024} is conceptually powerful but often faces practical hurdles in accurately defining misclassification costs, which is difficult for complex real-world datasets. This highlights that simply modifying data distributions or loss functions may be inadequate for effectively addressing severe imbalance in challenging tabular settings \citep{krawczyk_cost-sensitive_2014}.

These challenges have motivated the exploration of methods that can inherently learn more discriminative features. Graph Neural Networks (GNNs) \citep{4700287,gori2005new} offer a strong approach for modeling tabular data by capturing complex dependencies often overlooked by traditional models. However, applying GNNs to tabular data can be computationally intensive and face scalability limitations, especially when large datasets or instance-wise graph construction are involved \citep{villaizan-vallelado_graph_2024,lee_analysis_2024}. 

Beyond graph-based representation learning, contrastive learning (CL) has recently shown notable success in improving generalization by structuring feature representations based on underlying patterns \citep{hu_comprehensive_2024}. A notable advancement was proposed by Tao et al. \citep{tao_supervised_2024}, who combined Supervised Contrastive Learning (SCL) with automatic tuning of the temperature parameter $\tau$, a key factor influencing performance. While this approach marks great progress, it presents challenges that limit its generalizability. Specifically, it requires dataset-specific tuning of architecture (e.g., layer size, batch size), making its application less practical across diverse scenarios. Moreover, this study did not report precision and recall separately; these metrics are critical in expert systems where the costs of false positives and false negatives differ significantly. The method also depends on complex hyperparameter tuning via Tree-structured Parzen Estimator (TPE), which increases model complexity. Additionally, their use of basic augmentation, especially Gaussian blur, leaves room for more effective techniques. In fact, CL’s success in image domains is largely driven by augmentation techniques, which do not translate well to tabular data due to the lack of spatial or structural properties. 

Basic mixing of tabular samples can generate unrealistic data points, particularly harming rare class representations. Thus, more adaptive and smarter augmentation strategies are needed to generate meaningful and diverse samples without introducing much noise. Concurrently, researchers have explored new computational approaches to build neural networks that can learn complex patterns effectively, even with limited or complex data. One such direction is Quantum-inspired (QI) DL, which uses ideas from quantum mechanics to improve classical models’ learning ability. However, these QI methods are still developing, especially in handling imbalanced and complex datasets~\citep{shi_two_2023, hong_hybrid_2024, konar_quantum-inspired_2020}.

To address these key challenges in handling imbalanced tabular data, we propose \ul{Q}uantum-informed \ul{C}ontrastive \ul{L}earning with Dynamic \ul{Mix}up Augmentation \ul{Net}work (QCL-MixNet), a novel framework that combines three core innovations. Firstly, we introduce novel Quantum Entanglement-inspired Modules within our neural network architecture, which is built with the idea of quantum mechanics without directly using the quantum computing hardware. This module is used to improve the model's capacity to capture complex, non-linear feature interactions often missed by standard layers, thereby improving feature representation, offering a practical advancement for QI techniques in this domain. Secondly, to tackle the critical issue of data augmentation for tabular data, we employ a Sample-Aware Dynamic Mixup strategy. Instead of random mixing that risks generating unrealistic samples,  our method intelligently generates synthetic instances by interpolating an anchor sample with one of its k-nearest neighbors (kNN) in the feature space. This approach aims to create more realistic and beneficial augmented samples, especially for underrepresented minority classes, thus enriching the training data without introducing significant noise. Existing solutions often rely on singular approaches (e.g., data resampling, basic cost-sensitive learning, or CL with simple loss functions) that may not comprehensively address severe imbalance or the need for well-structured embeddings. To solve this issue, finally, our model components are trained using a hybrid loss function to learn robust and discriminative embeddings. This uses focal reweighting to handle imbalance, contrastive, and triplet components for structured embedding learning, and variance regularization for stable and well-separated class representations. 

Our main contributions are: 
\begin{enumerate}
    \item We propose QCL-MixNet, a novel framework that effectively integrates quantum-inspired modules for expressive feature learning, kNN-guided sample-aware dynamic mixup for intelligent augmentation, and a hybrid contrastive loss for robust imbalanced classification.
    \item We conduct extensive experiments on 18 diverse binary and multi-class imbalanced datasets. Our results show that QCL-MixNet consistently and significantly outperforms 20 state-of-the-art ML, DL, and graph-based models, establishing a new benchmark for imbalanced tabular data.
    \item We conduct systematic ablation studies to validate the critical impact of each architectural component, demonstrating that the full QCL-MixNet architecture achieves superior and stable performance.
\end{enumerate}

The remainder of this paper is organized as follows: Section \ref{sec:litreview} reviews related work on class imbalance techniques, contrastive learning, quantum-inspired deep learning, and graph-based methods for tabular data. Section \ref{sec:methodology} introduces the proposed QCL-MixNet framework, detailing its architecture and theoretical foundations. Section \ref{sec:experiments} outlines the experimental setup, including datasets, baselines, and evaluation protocol. Section \ref{sec:results} presents and discusses the results. Section \ref{sec:conclude} concludes the paper with key findings, limitations of this study, and future directions.

\section{Literature Review}
\label{sec:litreview}
\subsection{Class Imbalance Techniques (Undersampling, Oversampling, CSL)}
A training set is considered imbalanced when one class has significantly fewer samples than the others \citep{barandela_strategies_2003}. Class imbalance becomes seriously problematic when identifying rare but crucial cases. This is a widespread challenge affecting various domains such as fraud detection, software engineering, fault diagnosis, intrusion detection, network security, social media analysis, medical diagnosis, malware detection, risk assessment, solar panel fault and anomaly detection \citep{wang_learning_2019,thabtah_data_2020,yuan_review_2023,patnaik_weighted_2023,giray_use_2023,wang_lightweight_2024,dhalaria_maldetect_2024,guo_reparameterization_2025,gan_integrating_2020, DBLP:journals/tifs/ZhengHHZL25}. Conventional ML algorithms typically assume a balanced dataset, so easily affected by the imbalance issue and tend to produce biased results favoring the majority class \citep{li_meta-learning_2025}. As a result, effectively identifying minority instances in imbalanced datasets has become a key research focus \citep{dai_mutually_2025}. 

Data-level methods are the preprocessing techniques that modify the dataset itself to improve the performance of standard training procedures. Undersampling, oversampling, and hybrid fall under this criterion \citep{buda_systematic_2018}. Oversampling increases the number of minority class instances by repeating them, while undersampling reduces the majority class instances to balance the classes. Hybrid sampling combines both methods \citep{DBLP:journals/csur/ShariefISN25}. Recently, several advanced undersampling, oversampling, and hybrid approaches have been introduced. The most widely used oversampling technique, Synthetic Minority Oversampling Technique (SMOTE), generates an equal number of synthetic samples for each minority instance, which can result in class overlap \citep{tao_supervised_2024}. Recently, Simplicial SMOTE was proposed, which uses groups of nearby points to create synthetic data instead of only using pairs of points (edges) like SMOTE \citep{DBLP:conf/kdd/KachanSG25}. Isomura et al. introduced an oversampling method using large language models (LLMs) to generate more realistic and varied synthetic data for imbalanced tabular datasets \citep{isomura_llmovertab_2025}. However, there are concerns about possible bias in the synthetic data. On the undersampling side, the Schur decomposition class-overlap undersampling method (SDCU), an undersampling method, uses Schur matrix decomposition and global similarity to handle class-overlap in imbalanced datasets \citep{dai_class-overlap_2023}. Random Forest Cleaning Rule (RFCL) was also introduced to balance imbalanced data by removing overlapping majority class samples. RFCL worked well but focused only on F1-score, which restricted its adaptability~\citep{zhang_rfcl_2021}. Yu et al. introduced Balanced Training and Merging (BTM) to improve the worst-performing categories in long-tailed learning \citep{yu_reviving_2025}. Limitations included a slight decrease in arithmetic accuracy in certain scenarios. 

Despite the advancements, limitations in data-level techniques persist. Undersampling can lead to critical information loss, which is a major concern. Oversampling, on the other hand, may cause overfitting. Hybrid sampling can inherit the drawbacks of both methods, leading to information loss or noise sensitivity \citep{wang_novel_2025}. Carvalho et al. explored different data resampling methods (oversampling, undersampling, and hybrid methods, including advanced ones)~\citep{carvalho_resampling_2025}. They concluded that no single method worked best for all cases. In contrast to data-level methods, algorithm-level approaches modify the classification algorithm itself, bypassing issues of data modification. 

CSL is one of the most popular algorithm-level methods. CSL addresses class imbalance by assigning different misclassification costs to each class, typically giving higher costs to minority class errors. This approach aims to minimize costly misclassifications \citep{araf_cost-sensitive_2024}. Tang et al. proposed a robust Two-Stage instance-level CSL method with the Bounded Quadratic Type Squared Error (BQTSE) loss function that showed improved classification accuracy~\citep{tang_robust_2024}. However, it might struggle with extremely large and complex datasets. Cao et al. introduced the concept of deep imbalanced regression (DIR) for addressing the issue of imbalanced data in predicting the remaining useful life (RUL) \citep{cao_cost-sensitive_2024}. It proposed methods like label and feature distribution normalization, ranking similarity optimization, and a CSL framework to improve predictions. Nevertheless, DIR faces challenges when dealing with very small datasets and requires further development in data augmentation techniques. In summary, while significant progress has been made in addressing class imbalance through data-level techniques and algorithm-level approaches, these methods still face substantial limitations. Data-level techniques may result in information loss or overfitting, and cost-sensitive learning may struggle to determine the appropriate misclassification costs for each class, and its effectiveness depends on the specific characteristics of the dataset. These challenges indicate that while improvements have been made, a comprehensive solution is yet to be fully realized.

\subsection{CL in Non-Vision Domains}
Representation learning plays a vital role in improving the performance of the ML models by revealing the underlying factors that drive variations in the data \citep{bengio_representation_2013}. CL, a prominent approach in representation learning, aims to improve feature representation by pulling similar samples closer and pushing dissimilar ones apart \citep{zhao_cross-supervised_2025}. CL has been widely applied in various fields, including image recognition and generation, adversarial samples detection, video and graph analysis, speech recognition, natural language processing, and recommendation systems \citep{hu_comprehensive_2024}. 

CL has already demonstrated outstanding performance in computer vision tasks \citep{kottahachchi_kankanamge_don_q-supcon_2025, liu_fedcl_2023, guo_enhancing_2025, zhang_graph-weighted_2025, zhou_drtn_2025, xu_contrastive_2025, wang_scl-wc_2022}. But in non-vision domains, specifically in tabular datasets, CL is relatively less explored. Recent efforts have attempted to bridge this gap. For example, Wu et al. \citep{wu_contrastive_2023} introduced CL-enhanced Deep Neural Network with Serial Regularization (CLDNSR) to effectively handle high-dimensional data with limited samples. However, it shows limitations when applied to imbalanced datasets. Tao et al. \citep{tao_supervised_2024} applied \text{SCL-TPE}, which improved representation quality and classification accuracy for imbalanced tabular datasets. Despite its success, their approach is still limited by inadequate data augmentation strategies and sensitivity to noisy labels. These findings indicate that while CL has shown progress in handling tabular data, current methods often face limitations related to imbalance handling, label noise, and effective augmentation.

\subsection{Quantum-Inspired DL}
DL models face difficulties when dealing with very small datasets \citep{sun_revisiting_2017}. Even with larger datasets, they struggle to effectively manage highly complex, variable data. Quantum models can be a promising candidate to address some of these limitations \citep{orka_quantum_2025}.
Quantum computing is an emerging field that uses the principles of quantum mechanics and offers a potential advantage over classical computing by overcoming certain constraints \citep{rieffel_introduction_2000}.
Quantum DL (QDL) models have the potential to improve speed \citep{liu_towards_2024, saggio_experimental_2021}, parameter efficiency \citep{ciliberto_quantum_2018}, feature representation \citep{havlicek_supervised_2019, goto_universal_2021}, generalization capabilities \citep{caro_generalization_2022}, and can outperform traditional DL models by achieving higher test accuracy in certain scenarios \citep{chen_quantum_2022}. 
However, despite these advantages, quantum computing suffers from high cost, limited coherence times, sensitivity to environmental interference, and error correction issues \citep{mandal_quantum_2025, orka_quantum_2025, harrow_quantum_2017}. 

QI models don't directly use quantum computing hardware \citep{jahin_qamplifynet_2023}. Instead, they incorporate principles of quantum mechanics to improve the performance and capabilities of classical DL models. Shi et al. \citep{shi_two_2023} proposed Interpretable Complex-Valued Word Embedding (ICWE) and Convolutional Interpretable Complex-Valued Word Embedding (CICWE), two QI neural networks to improve binary text classification. However, these models still face limitations in feature extraction. Hong et al. \citep{hong_hybrid_2024}
proposed a hybrid DL model, combining convolutional neural networks (CNNs), long short-term memory (LSTM), and QI neural network (QINN) for forecasting wind speed. Konar et al. \citep{konar_quantum-inspired_2020} proposed a Quantum-Inspired Self-Supervised Network (QIS-Net) for automatically segmenting brain MRI images. While QI DL models have shown promising results across various applications, they still face several challenges, including handling imbalanced and complex datasets, computational complexity, and limited scalability. These limitations highlight the need for further advancements in this area.

\subsection{GNNs for Tabular Data}
GNNs, a specialized area within DL, offer improved performance and interpretability by effectively capturing and learning from graph-structured data \citep{tan_amogel_2025}. A key property of tabular data is that the order of features holds no significance. Likewise, in graph data, the sequence of nodes does not matter; changing their arrangement does not affect the outcomes produced by GNNs. This similarity in nature makes GNNs a strong choice for tabular datasets \citep{villaizan-vallelado_graph_2024}. In recent years, Various GNN-based models have demonstrated significant advancements in handling tabular datasets. Li et al. reviewed how GNNs were used to analyze single-cell omics data, highlighting their success in tasks like cell type identification and gene regulation \citep{10.1093/bib/bbaf109}. However, limitations included high computational costs and difficulty in capturing global data structures. The authors suggested future improvements like better scalability and integration with foundation models. 

The Multiplex Cross-Feature Interaction Network (MPCFIN) addressed feature interaction and graph connectivity challenges in tabular data using a multiplex graph structure. But it combines hand-crafted and learned structures, which may introduce redundancy or conflicting information \citep{ye_cross-feature_2024}. Villaizán-Vallelado et al. proposed Interaction Network Contextual Embedding (INCE). GNN-based contextual embeddings were applied to outperform existing DL methods on tabular data but suffered from scalability issues and high training time due to per-row graph construction and complex edge-weight learning \citep{villaizan-vallelado_graph_2024}. Lee et al. introduced an algorithm combining feature-based and similarity-based learning with GNNs and contrastive loss to improve generalization \citep{lee_analysis_2024}. However, it faced limitations in scalability, interpretability, and computational cost, some common challenges for GNNs. Collectively, these studies underscore the growing capability of GNNs to handle complex tabular structures to some extent. Yet, further innovations are required in managing tabular data class imbalance, computational cost, and dynamic augmentation, limitations that are especially critical in real-world ES.

\begin{figure*}[!ht]
    \centering
    \includegraphics[width=1\linewidth]{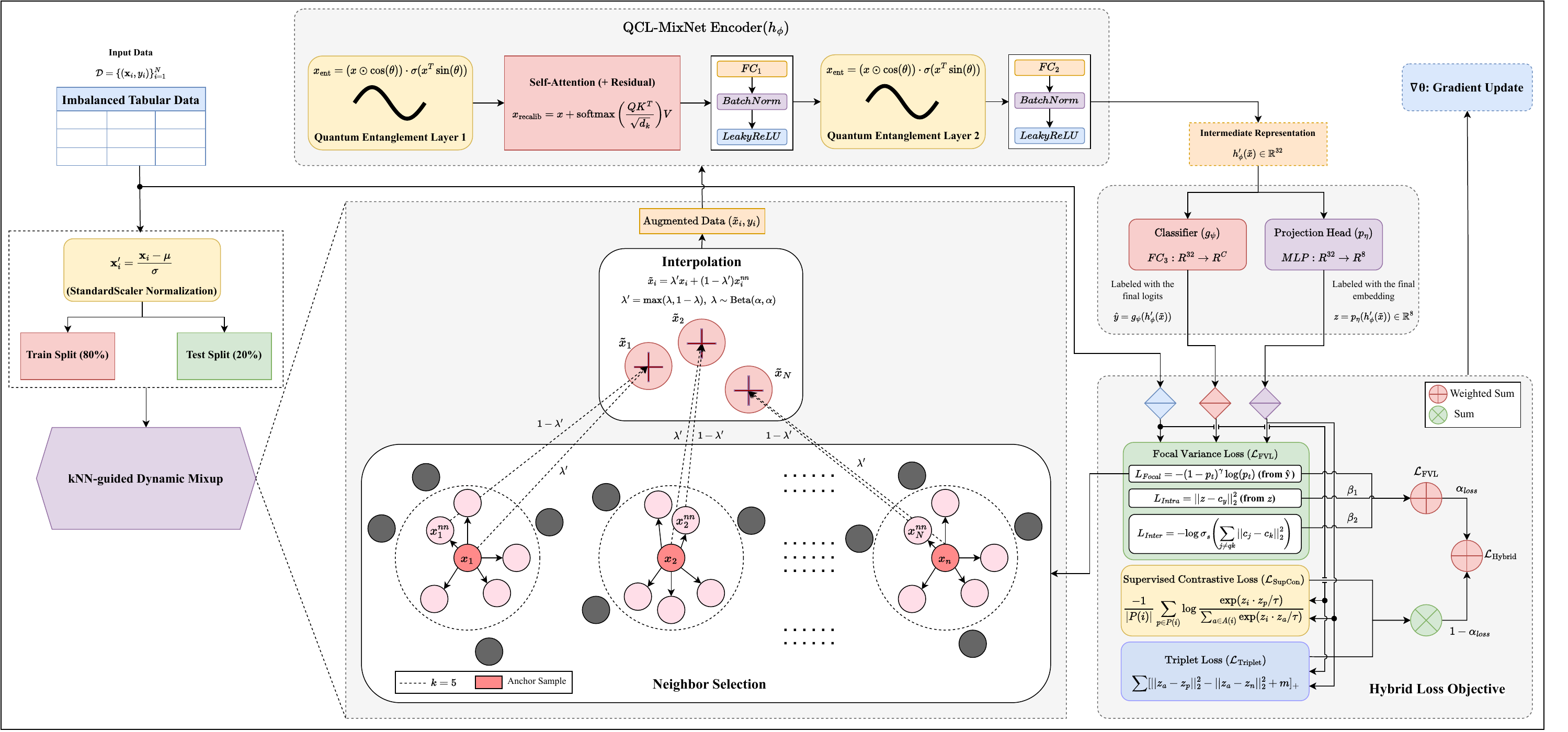}
    \caption{The overall architecture of the proposed QCL-MixNet framework for classifying imbalanced tabular data. Initially, the training data undergoes kNN-guided sample-aware dynamic mixup, where an augmented sample is generated by interpolating an anchor sample with its nearest neighbor. This augmented data is then processed by the QCL-MixNet encoder—a hybrid model featuring quantum entanglement layers and a self-attention mechanism—to produce an intermediate representation. This representation is fed into both a classifier head and a projection head. Finally, the network is optimized using a hybrid loss function that combines Focal Variance Loss, Supervised Contrastive Loss, and Triplet Loss to learn discriminative features.}
    \label{fig:framework}
\end{figure*}

\section{Materials and Methods}
\label{sec:methodology}
In this section, we outline the proposed methodology, which encompasses problem formulation, our novel QCL-MixNet architecture, a dynamic data augmentation strategy, and a hybrid loss function designed for robust representation learning and classification.

\subsection{Problem Statement}
Let $\mathcal{D} = \{(\mathbf{x}_i, y_i)\}_{i=1}^N$ be a training dataset of $N$ samples, where $\mathbf{x}_i \in \mathbb{R}^D$ is a $D$-dimensional feature vector and $y_i \in \{1, \dots, C\}$ is the corresponding class label from $C$ distinct classes. Our goal is to learn a mapping function $f_\Theta: \mathbb{R}^D \rightarrow \{1, \dots, C\}$, parameterized by $\Theta$, that accurately predicts the class label $y$ for an unseen feature vector $\mathbf{x}$. This is achieved by learning an intermediate embedding function $h_\phi: \mathbb{R}^D \rightarrow \mathbb{R}^d$ (where $d$ is the dimension of the embedding space, $d \ll D$ or $d$ can be an intermediate feature dimension) and a classifier $g_\psi: \mathbb{R}^d \rightarrow \mathbb{R}^C$, such that $f_\Theta(\mathbf{x}) = \text{argmax}(g_\psi(h_\phi(\mathbf{x})))$. The parameters $\Theta = \{\phi, \psi\}$ are optimized by minimizing a carefully designed loss function $\mathcal{L}$ over the training dataset $\mathcal{D}$.
The core of our method lies in the specific architectures for $h_\phi$ and $g_\psi$, the data augmentation techniques, and the composite nature of $\mathcal{L}$, all designed to enhance feature disentanglement, representation robustness, and classification performance, particularly in scenarios with complex data distributions.

\subsection{Model Architecture}
The proposed QCL-MixNet framework, illustrated in Figure \ref{fig:framework}, processes imbalanced tabular data through three main stages: a kNN-guided dynamic mixup for data augmentation, a quantum-informed encoder for feature representation, and a hybrid loss objective for optimization. The following subsections detail the theoretical and implementation aspects of each component.
\subsubsection{Quantum-Informed Feature Disentanglement}
To improve the model's ability to learn disentangled and informative features, we introduce quantum-informed entanglement layers augmented by attention mechanisms for feature recalibration.

\paragraph{Mathematical Formulation of Entanglement Layers.}
Inspired by the transformative operations in quantum systems, we propose a Quantum Entanglement (QE) layer. This layer is not intended to simulate quantum mechanics, but rather to leverage mathematical constructs reminiscent of quantum operations for feature transformation.
Let $\mathbf{x} \in \mathbb{R}^{d_{in}}$ be the input feature vector to the QE layer. The layer applies a set of learnable parameters $\boldsymbol{\theta} \in \mathbb{R}^{d_{in}}$. The transformation is a two-stage process:

\begin{enumerate}
    \item \textbf{Projection Stage:} The input features are first scaled element-wise, akin to a parameterized rotation or projection.
    \begin{equation}
        \mathbf{x}_{\text{proj}} = \mathbf{x} \odot \cos(\boldsymbol{\theta})
        \label{eq:qe_proj}
    \end{equation}
    where $\odot$ denotes element-wise multiplication, and $\cos(\boldsymbol{\theta})$ is applied element-wise to the parameter vector $\boldsymbol{\theta}$. This stage selectively modulates the amplitude of each feature.

    \item \textbf{Entanglement-inspired Gating Stage:} The projected features $\mathbf{x}_{\text{proj}}$ are then passed through a non-linear gating mechanism. This stage introduces interactions and dependencies across features, inspired by the concept of entanglement, where quantum states become correlated.
    A scalar gating value $s$ is computed based on $\mathbf{x}_{\text{proj}}$ and $\sin(\boldsymbol{\theta})$:
    \begin{equation}
        s = \sigma\left(\mathbf{x}_{\text{proj}}^\top \sin(\boldsymbol{\theta})\right)
        \label{eq:qe_gate_scalar}
    \end{equation}
    where $\sigma(\cdot)$ is the sigmoid activation function, ensuring $s \in (0,1)$. The final output of the QE layer $\mathbf{x}_{\text{ent}} \in \mathbb{R}^{d_{in}}$ is then:
    \begin{equation}
        \mathbf{x}_{\text{ent}} = \mathbf{x}_{\text{proj}} \cdot s
        \label{eq:qe_ent}
    \end{equation}
\end{enumerate}
The learnable parameters $\boldsymbol{\theta}$ allow the network to adaptively determine the optimal projection and feature interdependencies. The combination of cosine and sine transformations, modulated by learnable parameters, offers a rich function space for feature manipulation. The sigmoid gate allows for a soft selection or attenuation of the transformed feature set based on a collective signal derived from all features. This process aims to disentangle underlying factors of variation by creating complex, non-linear feature combinations and selectively emphasizing informative ones. Our model incorporates two such QE layers: one at the input level (acting on $\mathbb{R}^D$) and another at an intermediate feature level (acting on $\mathbb{R}^{64}$).

\paragraph{Attention Mechanisms for Feature Recalibration.}
Following the initial QE layer, we apply a self-attention mechanism to further refine and recalibrate feature representations. Self-attention allows the model to weigh the importance of different features within a sample dynamically, capturing global dependencies.
Given the output $\mathbf{x}$ from the first QE layer (or more generally, an input feature map of dimension $d_{attn}$), we treat it as a sequence of length 1 to apply attention within the sample's features (i.e., channel attention if features are considered channels). The specific implementation uses single-head attention where Query (Q), Key (K), and Value (V) matrices are derived from the same input $\mathbf{x}$.
Let $\mathbf{x} \in \mathbb{R}^{d_{attn}}$ be the input to the attention layer. It is first unsqueezed to $\mathbf{x}' \in \mathbb{R}^{1 \times d_{attn}}$ to match the expected batch-first input format for sequence length 1.
The Query, Key, and Value are computed as:
\begin{equation}
    \mathbf{Q} = \mathbf{x}' \mathbf{W}_Q, \quad \mathbf{K} = \mathbf{x}' \mathbf{W}_K, \quad \mathbf{V} = \mathbf{x}' \mathbf{W}_V
\end{equation}
where $\mathbf{W}_Q, \mathbf{W}_K, \mathbf{W}_V \in \mathbb{R}^{d_{attn} \times d_k}$ are learnable weight matrices (for a single head, $d_k = d_{attn}$). The attention output is then:
\begin{equation}
    \text{Attention}(\mathbf{Q}, \mathbf{K}, \mathbf{V}) = \text{softmax}\left(\frac{\mathbf{Q}\mathbf{K}^\top}{\sqrt{d_k}}\right)\mathbf{V}
    \label{eq:attention}
\end{equation}
The output of the attention mechanism, $\mathbf{x}_{\text{attn}}$, is added back to the input $\mathbf{x}$ via a residual connection:
\begin{equation}
    \mathbf{x}_{\text{recalibrated}} = \mathbf{x} + \mathbf{x}_{\text{attn}}
\end{equation}
This residual connection facilitates gradient flow and allows the model to adaptively decide how much recalibration is needed. By focusing on the most salient features relative to each other within a given sample, the attention mechanism complements the QE layer, improving the model's ability to extract discriminative information. In our architecture, this attention layer uses $d_{attn} = D$ (input feature dimension) and a single attention head.

\subsubsection{kNN-Guided Sample-Aware Dynamic Mixup}
Data augmentation is crucial for improving generalization. We apply a sample-aware dynamic mixup strategy, termed kNN-Guided Dynamic Mixup, which generates synthetic samples by interpolating between an anchor sample and one of its nearest neighbors in the feature space. This approach aims to create more meaningful and challenging training examples compared to the standard mixup that interpolates random pairs.

\begin{definition}[kNN-Guided Dynamic Mixup]
Let $(\mathbf{x}_i, y_i)$ be an anchor sample from a mini-batch $\mathcal{B}$.
\begin{enumerate}
    \item \textbf{Neighbor Selection:} For each $\mathbf{x}_i \in \mathcal{B}$, we identify its $k$ nearest neighbors $\{\mathbf{x}_{i,j}^{\text{NN}}\}_{j=1}^k$ from $\mathcal{B}$ (excluding $\mathbf{x}_i$ itself) based on Euclidean distance in the current feature space $h_\phi(\mathbf{x})$. One neighbor, $\mathbf{x}_{i}^{\text{NN}}$, is randomly selected from this set. Let its corresponding label be $y_{i}^{\text{NN}}$.
    \item \textbf{Interpolation Parameter Sampling:} An interpolation coefficient $\lambda$ is sampled from a Beta distribution, $\lambda \sim \text{Beta}(\alpha, \alpha)$. To ensure the anchor sample retains a dominant influence, $\lambda$ is adjusted as $\lambda' = \max(\lambda, 1-\lambda)$. This biases $\lambda'$ towards values $\geq 0.5$.
    \item \textbf{Feature Interpolation:} The mixed feature vector $\tilde{\mathbf{x}}_i$ is generated as:
    \begin{equation}
        \tilde{\mathbf{x}}_i = \lambda' \mathbf{x}_i + (1-\lambda') \mathbf{x}_{i}^{\text{NN}}
        \label{eq:knn_mixup_x}
    \end{equation}
    \item \textbf{Label Interpolation:} Labels are mixed similarly, assuming one-hot encoding $\mathbf{y}^{\text{OH}}$ for labels:
    \begin{equation}
        \tilde{\mathbf{y}}_i^{\text{OH}} = \lambda' \mathbf{y}_i^{\text{OH}} + (1-\lambda') (\mathbf{y}_{i}^{\text{NN}})^{\text{OH}}
        \label{eq:knn_mixup_y}
    \end{equation}
\end{enumerate}
The hyperparameter $\alpha$ controls the strength of interpolation, and $k=5$ in our case defines the neighborhood size.
\end{definition}

\textbf{Rationale:} Interpolating with kNNs encourages local linearity and smoothness of the decision boundary in denser regions of the feature manifold. By construction, $\mathbf{x}_{i}^{\text{NN}}$ is semantically similar to $\mathbf{x}_i$, making $\tilde{\mathbf{x}}_i$ a plausible variation. This contrasts with random-pairing mixup, which might interpolate between semantically distant samples, potentially generating less realistic examples. In our training, we use the mixed features $\tilde{\mathbf{x}}_i$, but for the classification component of our loss, we associate them with the original label $y_i$. This strategy regularizes the model to be robust to perturbations towards its neighbors while maintaining the original class identity, effectively encouraging enlargement of the decision region for class $y_i$ to include these sensible interpolations.

\subsubsection{Hybrid Contrastive Loss with Variance Regularization}
To learn discriminative and robust embeddings, we propose a hybrid loss function that integrates focal loss for classification with supervised contrastive loss, triplet loss, and an explicit variance regularization term based on learnable class centroids. Let $h_\phi(\tilde{\mathbf{x}})$ be the embedding (output of the projection head) for a (potentially augmented) input $\tilde{\mathbf{x}}$, and $g_\psi(h'_\phi(\tilde{\mathbf{x}}))$ be the raw logits from the classifier, where $h'_\phi$ is the representation before the projection head. The original label is $y$.

\paragraph{Focal Variance Loss.}
The Focal Variance Loss (FVL) component addresses class imbalance and hard example mining in classification, while simultaneously promoting intra-class compactness and inter-class separability in the embedding space.
\begin{definition}[Focal Variance Loss]
The FVL for a sample $(\tilde{\mathbf{x}}, y)$ with embedding $\mathbf{e} = h_\phi(\tilde{\mathbf{x}})$ and logits $\mathbf{z} = g_\psi(h'_\phi(\tilde{\mathbf{x}}))$ is:
\begin{equation}
    \mathcal{L}_{\text{FVL}}(\mathbf{z}, y, \mathbf{e}) = \mathcal{L}_{\text{Focal}}(\mathbf{z}, y) + \beta_1 \mathcal{L}_{\text{intra}}(\mathbf{e}, y) + \beta_2 \mathcal{L}_{\text{inter}}(y)
    \label{eq:fvl_total}
\end{equation}
where $\beta_1, \beta_2$ are weighting hyperparameters. In our implementation, $\beta_1=0.8$ and $\beta_2$ is implicitly 1 indicating $\mathcal{L}_\text{inter}$ is unweighted.
\end{definition}

\begin{enumerate}
    \item \textbf{Focal Loss Component ($\mathcal{L}_{\text{Focal}}$):} This addresses class imbalance by down-weighting the loss assigned to well-classified examples. Given the probability $p_t$ for the true class $y$ (derived from logits $\mathbf{z}$ via softmax, $p_t = \text{softmax}(\mathbf{z})_y$), the Focal Loss is:
    \begin{equation}
        \mathcal{L}_{\text{Focal}}(\mathbf{z}, y) = -(1-p_t)^\gamma \log(p_t) = (1 - p_t)^\gamma \log \frac{1}{p_t}
        \label{eq:focal_loss}
    \end{equation}
    where $\gamma \ge 0$ is the focusing parameter. We use $\gamma=3.0$ in our experiments.

    \item \textbf{Class Centroid Learning and Intra-Class Compactness ($\mathcal{L}_{\text{intra}}$):} We maintain learnable class centroids $\mathbf{c}_j \in \mathbb{R}^{d_e}$ for each class $j \in \{1, \dots, C\}$, where $d_e$ is the dimension of embeddings from the projection head ($d_e=8$ in our case). These centroids are parameters of the FVL module.
    The intra-class compactness loss penalizes the distance of an embedding $\mathbf{e}$ to its corresponding class centroid $\mathbf{c}_y$:
    \begin{equation}
        \mathcal{L}_{\text{intra}}(\mathbf{e}, y) = ||\mathbf{e} - \mathbf{c}_y||_2^2
        \label{eq:intra_variance}
    \end{equation}
    This term encourages embeddings of the same class to cluster tightly around their respective centroids. In the provided code, $\beta_1$ corresponds to `beta` in `FocalVarianceLoss`.

    \item \textbf{Inter-Class Separability ($\mathcal{L}_{\text{inter}}$):} To ensure centroids of different classes are well-separated, we introduce a penalty based on pairwise distances between centroids of classes present in the current mini-batch. Let $\mathcal{C}_{\text{batch}}$ be the set of unique classes in the current batch.
    \begin{equation}
        \mathcal{L}_{\text{inter}}(y) = -\log \sigma_s \left( \frac{1}{|\mathcal{C}_{\text{batch}}|(|\mathcal{C}_{\text{batch}}|-1)} \sum_{j \in \mathcal{C}_{\text{batch}}} \sum_{k \in \mathcal{C}_{\text{batch}}, k \neq j} ||\mathbf{c}_j - \mathbf{c}_k||_2 \right)
        \label{eq:inter_variance_conceptual}
    \end{equation}
    where $\sigma_s(\cdot)$ is the sigmoid function and $\log \sigma_s (x) = \log \frac{1}{(1 + e^{-x})}$ penalizes low centroid separation. This loss term encourages maximization of the average inter-centroid distance. 
\end{enumerate}

\begin{remark}
The use of the log sigmoid in our implementation penalizes low centroid separation. In practice, $\beta = 0.8$ controls the strength of intra-class compactness. The inter-class term is unweighted and added directly. The centroids $\{\mathbf{c}_j\}$ are learnable and updated jointly with the network.
\end{remark}
The complete FVL objective integrates all three components into the following expression:
\begin{equation}
\mathcal{L}_{\text{FVL}} = \mathbb{E}_{(\mathbf{x}, y)} \left[ \underbrace{(1 - p_t)^\gamma \log \frac{1}{p_t}}_{\text{Focal Loss}} + \underbrace{\beta_1 \cdot \|\mathbf{e} - \mathbf{c}_y\|_2^2}_{\text{Intra-Class Compactness}} \right] + \underbrace{-\log \sigma\left( \mathbb{E}_{j \ne k} \|\mathbf{c}_j - \mathbf{c}_k\|_2 \right)}_{\text{Inter-Class Separation}}
\end{equation}

\paragraph{Integration with Supervised Contrastive and Triplet Loss.}
The FVL is combined with established metric learning losses to further structure the embedding space. This forms our Hybrid Loss.
\begin{definition}[Hybrid Loss]
The total hybrid loss $\mathcal{L}_{\text{Hybrid}}$ for a mini-batch is:
\begin{equation}
    \mathcal{L}_{\text{Hybrid}} = \alpha_{\text{loss}} \mathcal{L}_{\text{FVL}} + (1-\alpha_{\text{loss}}) (\mathcal{L}_{\text{SupCon}} + \mathcal{L}_{\text{Triplet}})
    \label{eq:hybrid_loss}
\end{equation}
where $\alpha_{\text{loss}}$ is a weighting factor ($\alpha_{\text{loss}}=0.5$ in this study).
\end{definition}

\begin{enumerate}
    \item \textbf{Supervised Contrastive Loss ($\mathcal{L}_{\text{SupCon}}$):}
    This temperature-scaled loss~\citep{supconloss} encourages embeddings of samples from the same class to lie closer together in the representation space, while pushing apart embeddings from different classes. Given an anchor embedding \( \mathbf{e}_i \in \mathbb{R}^{d_e} \) with class label \( y_i \), the supervised contrastive loss is defined as:
    \begin{equation}
        \mathcal{L}_{\text{SupCon}} = \sum_{i=1}^N \frac{-1}{|\mathcal{P}(i)|} \sum_{p \in \mathcal{P}(i)} \log \frac{\exp\left(\text{sim}(\mathbf{e}_i, \mathbf{e}_p)/\tau\right)}{\sum_{a \in \mathcal{A}(i)} \exp\left(\text{sim}(\mathbf{e}_i, \mathbf{e}_a)/\tau\right)}
        \label{eq:supcon}
    \end{equation}
    where \( \mathcal{P}(i) = \{ j \ne i \mid y_j = y_i \} \) denotes the set of positive indices for anchor \( i \), and \( \mathcal{A}(i) = \{ j \ne i \} \) is the set of all other indices in the batch. The similarity function \( \text{sim}(\cdot, \cdot) \) is implemented as cosine similarity, and \( \tau > 0 \) is a temperature hyperparameter. We use \( \tau = 0.2 \) in our experiments.
    
    \item \textbf{Triplet Loss ($\mathcal{L}_{\text{Triplet}}$):}
    Triplet loss encourages an embedding space where examples of the same class are pulled closer together while pushing apart examples of different classes. Specifically, for an anchor embedding \( \mathbf{e}_a \), a positive sample \( \mathbf{e}_p \) (same class), and a negative sample \( \mathbf{e}_n \) (different class), the loss enforces a margin \( m > 0 \) between intra-class and inter-class distances:
    \begin{equation}
        \mathcal{L}_{\text{Triplet}} = \sum_{(\mathbf{e}_a, \mathbf{e}_p, \mathbf{e}_n)} \left[ \|\mathbf{e}_a - \mathbf{e}_p\|_2^2 - \|\mathbf{e}_a - \mathbf{e}_n\|_2^2 + m \right]_+,
        \label{eq:triplet}
    \end{equation}
    where \( [\cdot]_+ = \max(0, \cdot) \) denotes the hinge function. To improve convergence and avoid training on trivial triplets, we apply \textit{adaptive triplet mining} using a Multi-Similarity Miner~\citep{multisimilarityminer}. This strategy dynamically selects informative (hard or semi-hard) triplets from the mini-batch, based on the relative similarity of samples. In our experiments, we use a margin of \( m = 0.5 \).
\end{enumerate}
This hybrid loss structure capitalizes on the complementary strengths of each component: FVL for robust classification and centroid-based regularization, SupCon for global structure in the embedding space by contrasting multiple positives and negatives, and Triplet loss for fine-grained separation using specific anchor-positive-negative relationships.

\subsubsection{Projection Head for Robust Embeddings}
Following standard practice in CL frameworks~\citep{chen_simple_2020}, we implement a projection head \( p_\eta: \mathbb{R}^{d'} \rightarrow \mathbb{R}^{d_e} \) that maps intermediate representations to a space optimized for metric learning. Specifically, we transform the output of the second fully connected layer (\texttt{fc2}), where \( d' = 32 \), to an embedding dimension \( d_e = 8 \) used for supervised contrastive and variance-based objectives. The projection head is implemented as a two-layer MLP:
\begin{equation}
    \mathbb{R}^{32} \xrightarrow{\text{Linear}(16)} \text{BatchNorm} \xrightarrow{\text{ReLU}} \text{Linear}(8) \longrightarrow \mathbb{R}^{8}
\end{equation}
This component is trained jointly with the encoder \( h_\phi \), but its output is used \emph{only} for contrastive and regularization losses. The final classifier \( g_\psi \) (i.e., \texttt{fc3}) operates directly on the 32-dimensional features from \( h_\phi \), without the projection head, thus decoupling classification from CL.

\subsubsection{Overall Training Procedure}
Algorithm \ref{alg:QCLMixNet} presents a pseudocode that summarizes the complete end-to-end training procedure. During each training epoch, the model iterates over mini-batches, where we first apply our kNN-guided sample-aware dynamic mixup strategy to generate meaningful augmented samples. These augmented inputs are then passed through the core architecture, yielding two decoupled outputs: the final classification logits $\hat{y}$ for prediction, and low-dimensional embeddings $z$ from a dedicated projection head, used exclusively for metric learning. Importantly, the hybrid loss function is computed using the augmented representations but is supervised by the original, unmixed labels $y$. This design encourages the model to be robust against local perturbations in feature space while preserving the semantic identity of the anchor samples. Finally, model parameters are updated via backpropagation, and the best-performing checkpoint is preserved based on its macro-F1 score on a held-out validation set.

\subsection{Theoretical Analysis}
The design of our QCL-MixNet incorporates several components, each contributing to its overall learning capability and robustness.

\subsubsection{Expressiveness of QE Layers}
The QE layers incorporate sinusoidal projections and sigmoid-based gating (Eq.~\ref{eq:qe_proj}–\ref{eq:qe_ent}), yielding a non-linear transformation of the input space:
\begin{equation}
\mathbf{x}_{\text{ent}} = \sigma\left(\mathbf{x}^\top \sin(\boldsymbol{\theta})\right) \cdot \left( \mathbf{x} \odot \cos(\boldsymbol{\theta}) \right)
\end{equation}
where \( \boldsymbol{\theta} \in \mathbb{R}^d \) is a learnable parameter vector. This structure introduces both global feature interactions (via the dot product) and localized modulations (via the cosine-weighted projection).

\begin{proposition}[Expressiveness of QE Layer Composition]
Let \( f_{\text{QE}}(\cdot; \boldsymbol{\theta}) \) be the transformation induced by a QE layer. Then, the function class
\begin{equation}
\mathcal{F}_{\text{QE}} = \left\{ f(\mathbf{x}) = g \circ f_{\text{QE}}(\mathbf{x}) \mid g \in \mathcal{F}_{\text{MLP}} \right\}
\end{equation}
is a universal approximator over compact subsets of \( \mathbb{R}^d \), provided \( g \) is a standard feed-forward neural network with non-linear activation.
\end{proposition}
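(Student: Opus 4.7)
The plan is to reduce the claim to the classical Universal Approximation Theorem (UAT) for MLPs (Cybenko, 1989; Hornik, 1991), which states that feed-forward networks with a single hidden layer and any non-polynomial activation are dense in $C(K, \mathbb{R}^m)$ for every compact $K \subset \mathbb{R}^d$. Since the definition of $\mathcal{F}_{\text{QE}}$ quantifies $g$ over all MLPs and admits any admissible $\boldsymbol{\theta}$, it suffices to exhibit a single parameter choice for which $f_{\text{QE}}(\cdot;\boldsymbol{\theta})$ is a homeomorphism onto its image; the rest follows by composition.

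The first step is to evaluate the QE layer at the distinguished point $\boldsymbol{\theta}=\mathbf{0}$. Plugging into Eqs.~\eqref{eq:qe_proj}--\eqref{eq:qe_ent} gives $\cos(\boldsymbol{\theta})=\mathbf{1}$, $\sin(\boldsymbol{\theta})=\mathbf{0}$, and therefore
\begin{equation}
f_{\text{QE}}(\mathbf{x};\mathbf{0}) = \sigma(0)\cdot\bigl(\mathbf{x}\odot\mathbf{1}\bigr) = \tfrac{1}{2}\mathbf{x}.
\end{equation}
This is a linear bijection of $\mathbb{R}^d$, hence a homeomorphism. The second step is to transport the approximation target along this bijection: given any continuous $h\colon K\to\mathbb{R}^m$ with $K\subset\mathbb{R}^d$ compact, set $\tilde{h}\colon \tfrac{1}{2}K\to\mathbb{R}^m$ by $\tilde{h}(\mathbf{y})=h(2\mathbf{y})$. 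Then $\tilde{h}$ is continuous on the compact set $\tfrac{1}{2}K$, and by UAT there exists an MLP $g\in\mathcal{F}_{\text{MLP}}$ with $\sup_{\mathbf{y}\in\tfrac{1}{2}K}\|g(\mathbf{y})-\tilde{h}(\mathbf{y})\|<\varepsilon$ for any prescribed $\varepsilon>0$. Composing with $f_{\text{QE}}(\cdot;\mathbf{0})$ yields
\begin{equation}
\sup_{\mathbf{x}\in K}\bigl\|g\!\circ\! f_{\text{QE}}(\mathbf{x};\mathbf{0})-h(\mathbf{x})\bigr\| \;=\; \sup_{\mathbf{x}\in K}\bigl\|g(\tfrac{1}{2}\mathbf{x})-h(\mathbf{x})\bigr\| \;<\; \varepsilon,
\end{equation}
which is exactly the universal approximation property for $\mathcal{F}_{\text{QE}}$.

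The main obstacle is conceptual rather than technical: the argument above is effectively a ``witness'' proof, leveraging a degenerate parameter configuration to invoke UAT. A stronger statement---that the property holds for generic or arbitrary fixed $\boldsymbol{\theta}$---would require analyzing when the map $\mathbf{x}\mapsto \sigma(\mathbf{x}^\top\sin\boldsymbol{\theta})\cdot(\mathbf{x}\odot\cos\boldsymbol{\theta})$ is injective (or has full-rank Jacobian a.e.) on $K$, since the scalar gate collapses information along the level sets of $\mathbf{x}^\top\sin\boldsymbol{\theta}$ when combined with zeros in $\cos\boldsymbol{\theta}$. To address this, I would characterize the set of ``bad'' $\boldsymbol{\theta}$ (those where $\cos\theta_i=0$ for some $i$, or where the gate degenerates) as a measure-zero subset of parameter space and argue injectivity elsewhere via a Jacobian computation---but for the stated proposition this refinement is unnecessary, since producing the single witness $\boldsymbol{\theta}=\mathbf{0}$ already secures universality of the composed class.
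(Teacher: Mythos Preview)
Your argument is correct and in fact more rigorous than the paper's own treatment. The paper offers only a brief sketch: it notes that the QE map is continuous, preserves input dimensionality, and ``acts as a learnable basis transformation,'' then invokes closure of universal approximators under composition. That sketch leaves implicit the one property that actually matters---namely that $f_{\text{QE}}$ can be chosen injective (or at least a homeomorphism onto its image)---since mere continuity and dimension preservation do not by themselves prevent information loss (a constant map satisfies both). Your witness construction at $\boldsymbol{\theta}=\mathbf{0}$, yielding $f_{\text{QE}}(\mathbf{x};\mathbf{0})=\tfrac{1}{2}\mathbf{x}$, supplies exactly this missing ingredient and makes the reduction to Cybenko--Hornik airtight. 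The trade-off is that the paper's phrasing gestures toward expressiveness across the whole parameter family, whereas your proof (as you correctly flag in your final paragraph) establishes universality of the \emph{class} $\mathcal{F}_{\text{QE}}$ via a single degenerate parameter value; but since the proposition as stated only asks for density of the class, your version is the one that actually closes the argument.
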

\begin{proof}[Sketch]
The QE transformation is continuous and differentiable in \( \mathbf{x} \) and \( \boldsymbol{\theta} \). Since it preserves the input dimensionality and introduces non-linearity, it acts as a learnable basis transformation. When composed with a fully connected MLP (which is a universal approximator), the overall function class remains dense in the space of continuous functions on compact domains (by the closure properties of universal approximators).
\end{proof}
Thus, QE layers contribute to the expressiveness of the network in a structured way, introducing spatially adaptive gates and sinusoidal modulation, which can improve data-fitting capacity while maintaining compact parameterization.

\subsubsection{Benefits of kNN-Guided Mixup}
Standard mixup~\citep{Zhangmixup2018} improves generalization by encouraging linear behavior between random pairs of training samples. However, such random interpolations may traverse regions far from the true data manifold, especially in class-imbalanced or multi-modal distributions. Our \textit{kNN-guided mixup} (Eq.~\ref{eq:knn_mixup_x}) refines this by restricting interpolation partners to semantically similar neighbors, thereby keeping synthetic samples closer to high-density regions of the data space.

\begin{proposition}[Manifold-Aware Regularization]
Let \( \mathcal{M} \subset \mathbb{R}^d \) be the data manifold. Compared to uniform mixup, kNN-guided mixup is more likely to generate samples \( \tilde{\mathbf{x}} = \lambda \mathbf{x}_i + (1 - \lambda) \mathbf{x}_j \) such that \( \tilde{\mathbf{x}} \in \mathcal{M} + \epsilon \), for small \( \epsilon > 0 \). This proximity to \( \mathcal{M} \) leads to stronger regularization and potentially tighter generalization bounds.
\end{proposition}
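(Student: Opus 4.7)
The plan is to exploit the local geometry of the data manifold $\mathcal{M}$: a smooth manifold is locally close to its tangent space, so a linear chord between two nearby points on $\mathcal{M}$ deviates from $\mathcal{M}$ by an amount controlled by curvature. I will then argue that kNN pairs are, by construction, close in the ambient metric, whereas uniformly sampled pairs typically are not, which directly translates into a smaller off-manifold error for the interpolated sample. The generalization claim follows by plugging the smaller deviation into a vicinal-risk / local-Lipschitz argument.

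First, I would formalize the geometric setup. Assume $\mathcal{M}$ is a $C^2$ submanifold of $\mathbb{R}^D$ with reach bounded below by $r > 0$ (equivalently, bounded second fundamental form). A standard result in manifold learning then gives that for any $\mathbf{x}_i, \mathbf{x}_j \in \mathcal{M}$ with $\|\mathbf{x}_i - \mathbf{x}_j\| \le \delta < r$ and any $\lambda \in [0,1]$, the chord point $\tilde{\mathbf{x}} = \lambda \mathbf{x}_i + (1-\lambda)\mathbf{x}_j$ satisfies $\mathrm{dist}(\tilde{\mathbf{x}}, \mathcal{M}) \le C\,\delta^{2}/r$ for a universal constant $C$. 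This quadratic dependence on the pair distance is the crucial lever.

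Next, I would control the expected pair distance under each scheme. Under uniform mixup, $\mathbb{E}\|\mathbf{x}_i - \mathbf{x}_j\|$ scales with the diameter of the support of the data distribution, which is $\Omega(1)$ in $N$. Under kNN-guided mixup with neighborhood size $k$, a standard kNN-distance concentration bound yields $\mathbb{E}\|\mathbf{x}_i - \mathbf{x}_i^{\mathrm{NN}}\| = O\!\bigl((k/N)^{1/d_{\mathcal{M}}}\bigr)$, where $d_{\mathcal{M}}$ is the intrinsic dimension of $\mathcal{M}$. Composing this with the curvature bound gives $\mathbb{E}\,\mathrm{dist}(\tilde{\mathbf{x}}_{\mathrm{kNN}}, \mathcal{M}) = O\!\bigl((k/N)^{2/d_{\mathcal{M}}}/r\bigr)$, which is strictly smaller than its uniform analogue for sufficiently large $N$ and class-wise balanced $k$. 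Choosing $\epsilon$ equal to this expected deviation proves the first half of the statement and makes the $\epsilon$ explicit.

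For the generalization implication, I would appeal to a vicinal-risk viewpoint: if the learned classifier is $L$-Lipschitz inside an $\epsilon$-tube around $\mathcal{M}$, then augmented samples within the tube yield unbiased-up-to-$L\epsilon$ estimates of the true risk, and the Rademacher complexity of the associated loss class inherits an $L\epsilon$ term. Substituting the improved $\epsilon$ from the previous paragraph tightens the bound by a factor proportional to the squared reduction in pair distance. The main obstacle I anticipate is justifying the manifold-with-positive-reach assumption for heterogeneous tabular data, where $\mathcal{M}$ need not be smooth and the intrinsic dimension $d_{\mathcal{M}}$ is unknown; I would address this by stating the result as an asymptotic comparative bound (kNN mixup dominates uniform mixup in expectation whenever $\mathcal{M}$ admits a positive local reach almost everywhere), rather than pursuing explicit constants, and by flagging the idealization openly as a modelling assumption.
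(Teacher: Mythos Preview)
Your proposal is sound and, in fact, far more rigorous than what the paper offers. The paper does not give a proof at all: for this proposition it supplies only a two-sentence \emph{intuition} to the effect that kNN neighbors are Euclidean-close so interpolation ``respects local structure,'' whereas random mixup may cross semantically disjoint classes and produce unrealistic points. There is no manifold regularity assumption, no reach or curvature bound, no kNN-distance concentration, and no vicinal-risk or Rademacher argument in the paper.

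Your route formalizes exactly the heuristic the paper gestures at: the chord-deviation bound $\mathrm{dist}(\tilde{\mathbf{x}},\mathcal{M}) \le C\delta^2/r$ under a positive-reach assumption, combined with the $O((k/N)^{1/d_{\mathcal{M}}})$ scaling of kNN distances, is the natural way to turn ``nearby points interpolate near the manifold'' into a quantitative statement, and the Lipschitz/vicinal-risk step is the standard way to convert off-manifold deviation into a generalization penalty. What your approach buys is an actual proposition with explicit rates and an honest accounting of assumptions (smoothness, positive reach, intrinsic dimension); what the paper's approach buys is brevity and no commitment to assumptions that may be dubious for tabular data. Your own caveat about the positive-reach assumption on heterogeneous tabular features is well placed and is precisely the gap the paper sidesteps by staying informal.
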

\begin{intuition} 
By sampling neighbors \( \mathbf{x}_j \in \mathcal{N}_k(\mathbf{x}_i) \) based on Euclidean proximity in the learned feature space, the interpolation respects local structure. In contrast, random mixup may interpolate between semantically disjoint classes, generating unrealistic data that can harm decision boundaries.
\end{intuition}

\subsubsection{Optimization Landscape with Hybrid Loss}
The hybrid loss function (Eq.~\ref{eq:hybrid_loss}) combines focal reweighting with metric-based representation learning and variance regularization. Each component shapes the loss landscape differently: (i) The focal term amplifies the gradient contribution of hard examples, counteracting class imbalance and encouraging escape from flat or poor local minima. (ii) The contrastive and triplet terms act on the embedding space, enforcing angular and margin-based class separation. (iii) The variance regularization terms induce intra-class compactness and inter-class repulsion, stabilizing feature distributions.

\begin{proposition}[Landscape Smoothing via Hybrid Loss]
Let \( \mathcal{L}_{\text{Hybrid}} = \alpha \mathcal{L}_{\text{Focal+Var}} + (1 - \alpha)(\mathcal{L}_{\text{SupCon}} + \mathcal{L}_{\text{Triplet}}) \). Then, under mild smoothness assumptions on the encoder and projection head, \( \mathcal{L}_{\text{Hybrid}} \) is locally Lipschitz and its gradient field encourages inter-class separation while preserving intra-class smoothness.
\end{proposition}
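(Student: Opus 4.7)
The plan is to split the proposition into its two assertions—local Lipschitz continuity and the qualitative sign structure of the gradient field—and to establish each by composing regularity properties of the individual loss components, all of which can be assumed differentiable on bounded subsets under the smoothness assumptions on the encoder $h_\phi$ and projection head $p_\eta$.

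First I would prove local Lipschitzness by working on a compact set $K \subset \mathbb{R}^{d_e}$ containing the embeddings and centroids (its existence follows from bounded inputs together with continuity of the network maps). The focal term $(1-p_t)^\gamma \log(1/p_t)$ is a composition of smooth functions of the logits via softmax, so on any compact subset with $p_t$ bounded away from $0$ its gradient is bounded and it is locally Lipschitz. The intra-class term $\|\mathbf{e}-\mathbf{c}_y\|_2^2$ is a quadratic with linear gradient, and the inter-class penalty $-\log \sigma_s(\cdot)$ is $C^1$ with uniformly bounded derivative since $\sigma_s$ is bounded away from $0$ on bounded arguments. The supervised contrastive loss is a log-sum-exp of cosine similarities scaled by $\tau>0$, hence smooth with gradient controlled by $1/\tau$ on $K$. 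Finally, the triplet hinge $[\cdot]_+$ is globally $1$-Lipschitz and composed with smooth squared distances, making the triplet term locally Lipschitz. Since local Lipschitzness is closed under finite nonnegative linear combinations, $\mathcal{L}_{\text{Hybrid}}$ inherits the property on $K$.

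For the gradient field claim I would expand $\nabla \mathcal{L}_{\text{Hybrid}}$ with respect to an embedding $\mathbf{e}_i$ with label $y_i$ and the centroid set $\{\mathbf{c}_j\}$, and read off the sign of each contribution. The intra-class gradient $2(\mathbf{e}_i - \mathbf{c}_{y_i})$ is purely attractive toward the class centroid, enforcing intra-class compactness. Differentiating the log-sigmoid of the averaged pairwise centroid distances yields repulsive terms of the form $-\sigma_s(-\bar{d})\,(\mathbf{c}_j-\mathbf{c}_k)/\|\mathbf{c}_j-\mathbf{c}_k\|_2$ whose coefficient is strictly positive, hence driving centroids apart. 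The standard derivation of $\nabla \mathcal{L}_{\text{SupCon}}$ decomposes into an attractive component on positives and a repulsive component on negatives weighted by their softmax probability, and the subgradient of $\mathcal{L}_{\text{Triplet}}$ on active triplets similarly pulls positives and pushes negatives. Combining these signed terms with nonnegative weights $\alpha$ and $1-\alpha$ gives a gradient field that simultaneously contracts same-class embeddings toward their centroid and repels different-class representations.

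The main obstacle is the non-smoothness introduced by the triplet hinge and the potential saturation of softmax at the boundary of the probability simplex, which prevent a global Lipschitz constant and a classical gradient everywhere. I would handle the first by restricting attention to the set of mined active triplets, on which the hinge is differentiable, and by appealing to the Clarke subdifferential elsewhere; I would handle the second by assuming bounded logits, which is guaranteed on $K$ by the continuity of $g_\psi$. These restrictions preserve the attraction–repulsion picture almost everywhere and therefore yield the stated local Lipschitz property together with the claimed geometric effect of the gradient field.
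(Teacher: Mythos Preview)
Your proposal is correct and follows the same component-wise decomposition strategy as the paper's own proof sketch, which simply lists that (i) the focal term is smooth away from $p_t=1$, (ii) $\mathcal{L}_{\text{SupCon}}$ has bounded gradients via the softmax denominator, (iii) the triplet term is piecewise linear with subgradients, and (iv) the variance terms are quadratic and convex, then concludes that the hybrid objective is piecewise smooth with gradients aligned toward class separation. Your treatment is substantially more detailed---restricting to a compact set, computing explicit gradient signs for each term, and invoking the Clarke subdifferential for the hinge---so it fully subsumes and in fact strengthens the paper's brief sketch.
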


\begin{proof}[Sketch]
(i) \( \mathcal{L}_{\text{Focal}} \) is smooth away from \( p_t = 1 \); its gradients are steep near misclassified samples.
(ii) \( \mathcal{L}_{\text{SupCon}} \) is differentiable and has bounded gradients due to the softmax denominator.
(iii) \( \mathcal{L}_{\text{Triplet}} \) is piecewise linear with subgradients due to the hinge.
(iv) Variance losses are quadratic in embeddings, hence smooth and convex.
Therefore, the overall hybrid objective is piecewise smooth and exhibits gradient alignment toward class-separating embeddings, regularized by compactness constraints.
\end{proof}

\subsubsection{Role of Decoupled Representations}
To reconcile the objectives of classification and contrastive representation learning, we use a projection head \( p_\eta \) to decouple the encoder \( h'_\phi \)'s output from the embedding space used by the contrastive and regularization losses. Specifically, the encoder produces \( h'_\phi(\mathbf{x}) \in \mathbb{R}^{d'} \), which is used by the classifier \( g_\psi \), while the projection head maps to \( p_\eta(h'_\phi(\mathbf{x})) \in \mathbb{R}^{d_e} \) for contrastive loss computation.

\begin{proposition}[Representation Preservation and Task Decoupling]
Let \( \mathcal{L}_{\text{total}} = \mathcal{L}_{\text{classification}} + \mathcal{L}_{\text{contrastive}} \). Applying the contrastive loss directly on \( h'_\phi(\mathbf{x}) \) can suppress dimensions useful for classification. The use of a non-linear projection \( p_\eta \) enables \( h'_\phi \) to learn richer, task-relevant features while allowing contrastive alignment to occur in a separate, dedicated space.
\end{proposition}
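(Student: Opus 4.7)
The plan is to prove the proposition in three stages: (i) characterize the invariance pressure that $\mathcal{L}_{\text{contrastive}} = \mathcal{L}_{\text{SupCon}} + \mathcal{L}_{\text{Triplet}}$ exerts on whatever features it consumes, (ii) show that applying this pressure directly to $h'_\phi(\mathbf{x})$ eliminates directions on which the classifier $g_\psi$ may rely, and (iii) show that interposing the non-linear projection $p_\eta$ re-routes the pressure into a separate space, leaving $h'_\phi$ free to retain richer, task-relevant features.

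First I would differentiate (\ref{eq:supcon}) and (\ref{eq:triplet}) with respect to an anchor embedding and observe that the SupCon gradient decomposes into an attractive component toward same-class positives and a repulsive component away from negatives, while the triplet hinge contributes an analogous mined component. Taking expectations over a balanced sampling of the batch, these gradients shrink the within-class covariance $\Sigma_y$ along directions parallel to $\mathbf{e}-\mathbf{c}_y$ while expanding the between-class covariance. A short lemma would formalize this as: at any stationary point of $\mathcal{L}_{\text{contrastive}}$ evaluated on a feature map $h$, the rank of $\Sigma_y(h)$ is upper-bounded by the dimensionality of the inter-class mean displacements, irrespective of the information originally carried by $h$.

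Specializing this lemma to $h = h'_\phi$ yields the first half of the proposition: since $g_\psi$ acts on $h'_\phi$ itself, any intra-class direction that the contrastive term drives to zero is also unavailable to the classifier. Under tabular imbalance, where a minority class is frequently multi-modal and subgroup or attribute information is label-relevant, suppression of these directions produces a Bayes-suboptimal decision rule; a Fano-type bound on the residual classification risk in terms of the lost conditional mutual information $I(h'_\phi; Y \mid \mathbf{c}_Y)$ makes the \emph{suppression} claim precise. For the second half, I would exploit the chain rule $\nabla_{h'_\phi}\mathcal{L}_{\text{contrastive}} = J_{p_\eta}^{\top}\,\nabla_{p_\eta(h'_\phi)}\mathcal{L}_{\text{contrastive}}$. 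Because $p_\eta:\mathbb{R}^{32}\to\mathbb{R}^{8}$ is a ReLU MLP, its Jacobian has rank at most eight and possesses a non-trivial null space in $\mathbb{R}^{32}$; the contrastive gradient therefore vanishes on any encoder direction lying in $\ker(J_{p_\eta})$, and the data-processing inequality gives $I(p_\eta(h'_\phi);Y) \le I(h'_\phi;Y)$, with strict inequality whenever $p_\eta$ is non-injective. Together these produce a configuration in which $\mathcal{L}_{\text{contrastive}}$ is satisfied in the eight-dimensional projection space while $h'_\phi$ simultaneously retains the directions required by $g_\psi$.

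The main obstacle will be upgrading the third step from feasibility to a statement about what joint gradient descent actually learns: a mere existence argument shows that the decoupled solution lives in the hypothesis class, but not that optimization converges to it. I would either appeal to implicit-bias results for ReLU networks that favor low-rank projection maps, or sidestep the dynamics by recasting the proposition as a population-risk comparison, showing that the attainable minimum of $\mathcal{L}_{\text{total}}$ under the projection-head architecture upper-bounds the corresponding minimum of the projection-free architecture. The latter route is cleaner, since it reduces the claim to a monotonicity statement between two nested hypothesis classes and avoids delicate dynamical arguments.
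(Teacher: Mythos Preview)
Your proposal is far more ambitious than what the paper actually supplies. The paper's own argument, explicitly labeled a sketch, consists of a single appeal to the empirical findings of \citet{chen_simple_2020} together with an informal sentence that the projection head ``frees'' the encoder from contrastive geometry; there is no gradient analysis, no rank or mutual-information machinery, and no population-risk comparison. Your route---characterizing the collapse pressure of $\mathcal{L}_{\text{SupCon}}+\mathcal{L}_{\text{Triplet}}$ via within-class covariance rank, then isolating the encoder from that pressure through the rank-deficient Jacobian of $p_\eta:\mathbb{R}^{32}\to\mathbb{R}^{8}$ and the data-processing inequality---is a genuinely different and much sharper argument. What it buys is an actual mechanism (the $\ker(J_{p_\eta})$ directions are invisible to the contrastive gradient) rather than a citation; what the paper's sketch buys is brevity and alignment with the empirical literature without committing to any formal claim. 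One caution on your final step: the ``nested hypothesis class'' comparison as stated has the inequality in a potentially confusing direction---you write that the projection-head minimum of $\mathcal{L}_{\text{total}}$ \emph{upper-bounds} the projection-free minimum, but the proposition is about the classification quality of $h'_\phi$, not about which architecture attains lower total loss, and these need not move together (the projection-free model can drive $\mathcal{L}_{\text{contrastive}}$ lower precisely by collapsing $h'_\phi$). You will want to recast the comparison as a bound on the \emph{classification} risk at the respective joint minimizers, or on $I(h'_\phi;Y)$ constrained to stationary points, rather than on $\mathcal{L}_{\text{total}}$ itself.
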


\begin{proof}[Sketch]
As shown empirically by~\citep{chen_simple_2020}, projecting features into a lower-dimensional space before applying contrastive objectives improves downstream classification. This is because the encoder is freed from learning features solely shaped by contrastive geometry. The projection head learns a task-specific transformation optimized for metric learning, while the encoder focuses on preserving discriminative information for the main task.
\end{proof}

\begin{algorithm}[H]
\caption{QCL-MixNet for imbalanced tabular data}
\label{alg:QCLMixNet}
\begin{algorithmic}[1]
\footnotesize
\Function{QuantumEntanglement}{$x \in \mathbb{R}^{B \times D}$}
    \State Initialize $\theta \in \mathbb{R}^D$
    \State $x_{\text{proj}} \gets x \odot \cos(\theta)$ \Comment{Element-wise projection}
    \State $s \gets \sigma(x_{\text{proj}} \cdot \sin(\theta)^\top)$ \Comment{Scalar signal via matrix-vector product}
    \State $x_{\text{ent}} \gets x_{\text{proj}} \odot s$ \Comment{Entanglement with broadcasting}
    \State \Return $x_{\text{ent}}$
\EndFunction

\Function{QI\_NN}{$x$, return\_embedding}
    \State $x \gets \text{QuantumEntanglement}_1(x)$
    \State $x' \gets \text{reshape}(x, (B, 1, D))$ \Comment{Unsqueeze for attention}
    \State $x_{\text{attn}}, \_ \gets \text{MultiheadAttention}(x', x', x')$
    \State $x \gets x + \text{reshape}(x_{\text{attn}}, (B, D))$ \Comment{Squeeze after attention}
    \State $x \gets \text{LeakyReLU}(\text{BN}_1(\text{FC}_1(x)))$
    \State $x \gets \text{QuantumEntanglement}_2(x)$
    \State $x \gets \text{LeakyReLU}(\text{BN}_2(\text{FC}_2(x)))$
    \If{return\_embedding}
        \State \Return ProjectionHead($x$)
    \Else
        \State \Return FC3($x$)
    \EndIf
\EndFunction

\Function{SampleAwareMixup}{$x, y, \alpha, k$}
    \State For each $x_i$: find its k-nearest neighbors and select one, $x_j$.
    \State For each $y_i$: select a label from a random permutation, $y_p$. \Comment{Note: $y_p$ is NOT $y_j$}
    \State $\lambda \sim \text{Beta}(\alpha, \alpha)$; $\lambda \gets \max(\lambda, 1 - \lambda)$
    \State $x_{\text{mix}} \gets \lambda x_i + (1 - \lambda)x_j$
    \State $y_{\text{mix}} \gets \lambda \cdot \text{one\_hot}(y_i) + (1 - \lambda) \cdot \text{one\_hot}(y_p)$
    \State \Return $x_{\text{mix}}, y_{\text{mix}}$
\EndFunction

\Function{FocalVarianceLoss}{$\hat{y}, y, z$}
    \State $L_{\text{CE}} \gets \text{CrossEntropy}(\hat{y}, y)$
    \State $pt \gets \exp(-L_{\text{CE}})$
    \State $L_{\text{focal}} \gets (1 - pt)^\gamma \cdot L_{\text{CE}}$
    \State $L_{\text{var}} \gets$ MSE between embedding $z$ and its class centroid
    \State $L_{\text{sep}} \gets$ Penalty based on distance between class centroids
    \State \Return $\text{mean}(L_{\text{focal}} + \beta \cdot L_{\text{var}}) + L_{\text{sep}}$
\EndFunction

\Function{HybridLoss}{$\hat{y}, y, z$}
    \State $L_{\text{cls}} \gets \text{FocalVarianceLoss}(\hat{y}, y, z)$
    \State $L_{\text{con}} \gets \text{SupConLoss}(z, y)$
    \State $L_{\text{triplet}} \gets \text{TripletLoss}(z, y)$
    \State \Return $\alpha L_{\text{cls}} + (1 - \alpha)(L_{\text{con}} + L_{\text{triplet}})$
\EndFunction

\Procedure{TrainModel}{}
    \For{epoch $=1$ to $100$}
        \For{each batch $(x, y)$ in training data}
            \State $x_{\text{mix}}, y_{\text{mix}} \gets \text{SampleAwareMixup}(x, y)$
            \State $\hat{y} \gets \text{QI\_NN}(x_{\text{mix}})$
            \State $z \gets \text{QI\_NN}(x_{\text{mix}}, \text{return\_embedding=True})$
            \State loss $\gets \text{HybridLoss}(\hat{y}, \mathbf{y}, z)$
 \Comment{Loss uses original `y`, not $y_{mix}$}
            \State Backpropagate and update model parameters
        \EndFor
\State Evaluate F1 on validation data
        \If{F1 improves}
            \State Save model as best checkpoint
        \EndIf
    \EndFor
\EndProcedure

\end{algorithmic}
\end{algorithm}

\begin{table*}[!ht]
\footnotesize
\centering
\begin{sc}
\begin{threeparttable}
\caption{Details of 7 binary and 11 multi-class imbalanced tabular datasets.} \label{tab:1}

\begin{tabular}{@{}clcccc@{}}
\toprule
\textbf{Classification Type} &
  \textbf{Datasets} &
  \textbf{\# Instances} &
  \textbf{\# Features} &
  \textbf{\# Class} &
  \textbf{Imbalance Ratio} \\ \midrule
 &
  ecoli\tnote{1} &
  336 &
  7 &
  2 &
  8.71 \\
 &
  optical\_digits\tnote{2} &
  5620 &
  64 &
  2 &
  9.13 \\
 &
  satimage\tnote{3} &
  6435 &
  36 &
  2 &
  9.30 \\
 &
  pen\_digits\tnote{4} &
  10992 &
  16 &
  2 &
  9.42 \\
 &
  abalone\tnote{5} &
  4177 &
  10 &
  2 &
  9.72 \\
 &
  isolet\tnote{6} &
  7797 &
  617 &
  2 &
  12.00 \\
\multirow{-7}{*}{Binary} &
  arrhythmia\tnote{7} &
  452 &
  278 &
  2 &
  17.20 \\ \midrule
 &
  minerals\tnote{8} &
  3112 &
  140 &
  7 &
  95.20 \\
 &
  vehicle\tnote{9} &
  846 &
  19 &
  4 &
  1.10 \\
 &
  satimage\tnote{10} &
  6430 &
  37 &
  6 &
  2.45 \\
 &
  har\tnote{11} &
  10299 &
  562 &
  6 &
  1.38 \\
 &
  wine-quality-red\tnote{12} &
  1599 &
  12 &
  6 &
  68.00 \\
 &
  lymph\tnote{13} &
  148 &
  19 &
  4 &
  17.00 \\
 &
  one-hundred-plants-texture\tnote{14} &
  1599 &
  65 &
  100 &
  1.33 \\
 &
  balance-scale\tnote{15} &
  625 &
  5 &
  3 &
  5.80 \\
 &
  wine-quality-white\tnote{16} &
  4898 &
  12 &
  7 &
  440.00 \\
 &
  letter\tnote{17} &
  20000 &
  17 &
  26 &
  1.11 \\
\multirow{-11}{*}{Multi-class} &
  glass\tnote{18} &
  214 &
  10 &
  6 &
  7.50 \\ \bottomrule
\end{tabular}%
\begin{tablenotes}
\footnotesize
\item[1] \url{https://archive.ics.uci.edu/ml/datasets/Ecoli}
\item[2] \url{https://archive.ics.uci.edu/dataset/80/optical+recognition+of+handwritten+digits}
\item[3] \url{https://archive.ics.uci.edu/dataset/146/statlog+landsat+satellite}
\item[4] \url{https://archive.ics.uci.edu/ml/datasets/pen-based+recognition+of+handwritten+digits}
\item[5] \url{https://archive.ics.uci.edu/ml/datasets/abalone}
\item[6] \url{https://archive.ics.uci.edu/ml/datasets/isolet}
\item[7] \url{https://archive.ics.uci.edu/ml/datasets/arrhythmia}
\item[8] \url{https://www.kaggle.com/datasets/vinven7/comprehensive-database-of-minerals/data}
\item[9] \url{https://www.openml.org/d/54}
\item[10] \url{https://www.openml.org/d/182}
\item[11] \url{https://www.openml.org/d/1478}
\item[12] \url{https://www.openml.org/d/40691}
\item[13] \url{https://www.openml.org/d/10}
\item[14] \url{https://www.openml.org/d/1493}
\item[15] \url{https://www.openml.org/d/11}
\item[16] \url{https://www.openml.org/d/40498}
\item[17] \url{https://www.openml.org/d/6}
\item[18] \url{https://www.openml.org/d/41}
\end{tablenotes}
\end{threeparttable}
\end{sc}
\end{table*}

\section{Experiments}
\label{sec:experiments}
\subsection{Datasets}
To thoroughly evaluate the effectiveness of our proposed framework, we benchmarked it on 18 publicly available imbalanced datasets from the \textit{UCI Machine Learning Repository} \citep{asuncion2007uci}, \textit{OpenML}~\citep{vanRijn}, and \textit{Kaggle}, comprising both binary and multi-class classification tasks. Table~\ref{tab:1} summarizes the detailed characteristics of these datasets, including the number of instances, features, classes, class imbalance ratio, and their sources. For binary classification, we selected seven datasets: \textit{ecoli}, \textit{optical\_digits}, \textit{satimage}, \textit{pen\_digits}, \textit{abalone}, \textit{isolet} and \textit{arrhythmia}. These datasets span diverse domains and present varying degrees of class imbalance (ranging from 8.71 to 17.20), which enables a rigorous evaluation of model robustness under challenging imbalance conditions. For multi-class classification, we used 11 datasets, including \textit{minerals}, \textit{vehicle}, \textit{satimage}, \textit{har}, \textit{wine-quality-red}, \textit{lymph}, \textit{one-hundred-plants-texture}, \textit{balance-scale}, \textit{wine-quality-white}, \textit{letter} and \textit{glass}. These datasets cover a wide range of class counts (from 3 to 100) and imbalance ratios (up to 440). Such diversity allows for evaluating the generalization capabilities of the proposed model in complex, real-world settings.

\subsection{Implementation}
\subsubsection{Training details and reproducibility measures}
To ensure fair comparisons and reproducibility across all models, we used a consistent 80:20 stratified train-test split with a fixed random seed (42) and a batch size of 64. Preprocessing involved \texttt{StandardScaler}\footnote{\url{https://scikit-learn.org/stable/modules/generated/sklearn.preprocessing.StandardScaler.html}}for feature normalization (zero mean and unit variance) and \texttt{LabelEncoder}\footnote{\url{https://scikit-learn.org/stable/modules/generated/sklearn.preprocessing.LabelEncoder.html}} for encoding class labels. We implemented \texttt{GridSearchCV}\footnote{\url{https://scikit-learn.org/stable/modules/generated/sklearn.model_selection.GridSearchCV.html}} with 3-fold cross-validation for hyperparameter tuning in ML models. All deep and QI models were trained for 100 epochs using the AdamW optimizer with an initial learning rate of $1\times10^{-3}$ and weight decay of $1\times10^{-5}$. For QCL-based models, a \textit{1-cycle learning rate policy scheduler}\footnote{\url{https://docs.pytorch.org/docs/stable/generated/torch.optim.lr_scheduler.OneCycleLR.html}} was applied with a maximum learning rate of $1\times10^{-2}$ to stabilize training. GNN models dynamically constructed k-nearest neighbor graphs (k = 5) using torch-cluster during both training and evaluation, which enables GNNs to model local feature interactions in non-explicit graph structures typical of tabular data.. Macro F1 score was computed after each epoch on the held-out test set, and the best-performing model checkpoint was retained for final evaluation.

\subsubsection{Hardware Setup}
All experiments were conducted on a system equipped with an Intel(R) Xeon(R) CPU (4 vCPUs @ 2.0 GHz, 30 GB RAM) and dual NVIDIA T4 GPUs (16 GB VRAM, 2560 CUDA cores each), enabling efficient parallel training for deep and graph-based models. The implementation leveraged \textit{PyTorch 2.2.0} \citep{paszke2019pytorch}  for DL, \textit{scikit-learn 1.4.2} \citep{pedregosa2011scikit}  for classical ML models, and \textit{PyTorch Geometric 2.6.1} \citep{fey_fast_2019} in combination with \textit{Torch Cluster 1.6.3} for implementing graph-based neural architectures. All code was developed and executed using \textit{Python 3.10.12} in a Linux-based environment. Visualizations were created using \textit{Matplotlib 3.8.4} \citep{4160265} and \textit{Seaborn 0.13.2} \citep{waskom2021seaborn}.

\subsection{Benchmark Protocol}
\subsubsection{Evaluation metrics}
In order to evaluate the performance of our models on both binary and multi-class classification tasks, we implemented four standard evaluation metrics: Accuracy, Macro Average Precision (maP), Macro Average Recall (maR), and Macro Average  F1-score (maF1). These metrics are especially useful in uneven class distribution scenarios, as the macro-averaged scores assign equal weight to each class irrespective of frequency and thus provide a balanced evaluation. 

\paragraph{Accuracy} is the proportion of correctly classified instances among the total number of samples.
\begin{equation}
\text{Accuracy} = \frac{TP + TN}{TP + TN + FP + FN}
\end{equation}
\paragraph{maP} computes precision independently for each class and then takes the unweighted mean.
\begin{equation}
\text{maP} = \frac{1}{C} \sum_{i=1}^{C} \frac{TP_i}{TP_i + FP_i}
\end{equation}
\paragraph{maR} calculates recall for each class and averages them equally.
\begin{equation}
\text{maR} = \frac{1}{C} \sum_{i=1}^{C} \frac{TP_i}{TP_i + FN_i}
\end{equation}
\paragraph{maF1} is the harmonic mean of macro precision and macro recall. This metric combines precision and recall into a single value, treating both equally. By considering both metrics, it provides a more balanced evaluation of a \text{model's performance}.
\begin{equation}
\text{maF1} = \frac{2 \times \text{maP} \times \text{maR}}{\text{maP} + \text{maR}}
\end{equation}
Here, $C$ is the number of classes, and $TP_i$, $FP_i$, $FN_i$ represent the true positives, false positives, and false negatives for class $i$, respectively.

\subsection{Baseline Models}
We benchmarked our framework against 20 strong baseline models, including 8 ML, 7 DL, and 5 GNN models.
We implemented ML models including Extreme Gradient Boosting (XGBoost), Balanced Random Forest (Balanced RF), Support Vector Machine with SMOTE oversampling (SVM (SMOTE)), Decision Tree (DT), Random Forest (RF), Gradient Boosting (GB), Logistic Regression (LR), and kNN. Key hyperparameters (e.g., number of estimators, learning rate, regularization terms) were optimized via grid search.

The DL models consist of a MLP with two hidden layers (128 and 64 units) and batch normalization, a ResNet with three residual blocks, a Gated Recurrent Unit (GRU) and a Long Short-Term Memory (LSTM) network with 128 hidden units, their bidirectional variants (BiGRU and BiLSTM, 128 units per direction), and a Convolutional Neural Network (CNN) comprising two 1D convolutional layers (32 and 64 filters, kernel size 3).

To evaluate graph-structured representations, we adopted five GNNs implemented Graph Convolutional Network (GCN) \citep{GCN}, GraphSAGE \citep{graphSage}, Graph Attention Network (GAT) \citep{GAT}, Graph Isomorphism Network (GIN) \citep{GIN}, and Jumping Knowledge Network (JKNet) \citep{jknet}. The GNN architectures implemented in this study comprise two layers (Except for JKNet, which uses three layers) with 64 hidden units each, followed by a fully connected output layer for classification. All models were trained on identical data splits with consistent evaluation protocols for fair comparison.



\let \clearpage

{\small\tabcolsep=5pt  
\footnotesize
\begin{sc}

\end{sc}
}
\normalsize


\begin{table}[!ht]
{\small\tabcolsep=5pt  
\footnotesize
\begin{sc}
\caption{Ablation study results for 7 binary tabular datasets. \textbf{Bold} indicates the best performance and {\ul{underline}} indicates the second best performance. `Diff' indicates the absolute improvement of QCL-MixNet over either the best-performing or the second-best-performing baseline model for each dataset, depending on which is more relevant in each scenario.}
\label{tab:ablation-binary}
\begin{tabular}{@{}cccccc@{}}
\toprule[1.5pt]
\textbf{Dataset}  &  \textbf{Model}  & \textbf{Accuracy (\textcolor{mygreen}{$\uparrow$})} & \textbf{maP (\textcolor{mygreen}{$\uparrow$})} & \textbf{maR (\textcolor{mygreen}{$\uparrow$})} & \textbf{maF1 (\textcolor{mygreen}{$\uparrow$})} \\ \midrule[1pt]
\multirow{5}{*}{ecoli}           & \textbf{QCL-MixNet (Full)} & \textbf{0.96}          & \textbf{0.90}     & \textbf{0.85}     & \textbf{0.87}      \\
                                 & QCL-MixNet (No Quantum)    & 0.90                   & 0.45              & 0.50              & 0.47               \\
                                 & QCL-MixNet (No Mixup)      & \ul{0.94}             & \ul{0.84}        & \ul{0.84}        & \ul{0.84}         \\
                                 & QCL-MixNet (No Attention)  & 0.90                   & 0.45              & 0.50              & 0.47               \\  \cmidrule(l){2-6}
                                 & Diff                       & \textcolor{mygreen}{+0.02}                  & \textcolor{mygreen}{+0.06}             & \textcolor{mygreen}{+0.01}             & \textcolor{mygreen}{+0.03}              \\ \midrule
\multirow{5}{*}{optical\_digits} & \textbf{QCL-MixNet (Full)} & \textbf{0.99}          & 0.96              & \textbf{0.99}     & \textbf{0.98}      \\
                                 & QCL-MixNet (No Quantum)    & \textbf{0.99}          & \textbf{0.99}     & \ul{0.98}        & \textbf{0.98}      \\
                                 & QCL-MixNet (No Mixup)      & \ul{0.98}             & 0.93              & 0.95              & \ul{0.94}         \\
                                 & QCL-MixNet (No Attention)  & \ul{0.98}             & \ul{0.98}        & 0.90              & 0.93               \\  \cmidrule(l){2-6}
                                 & Diff                       & \textcolor{blue}{0.00}                   & \textcolor{myred}{-0.03}
                                 & \textcolor{mygreen}{+0.01}             & \textcolor{blue}{0.00}               \\ \midrule
\multirow{5}{*}{satimage}    & \textbf{QCL-MixNet (Full)} & \textbf{0.95} & \textbf{0.86} & \textbf{0.82} & \textbf{0.84} \\
                                 & QCL-MixNet (No Quantum)    & \ul{0.93}             & 0.83              & \ul{0.73}        & \ul{0.76}         \\
                                 & QCL-MixNet (No Mixup)      & \ul{0.93}             & \ul{0.84}        & 0.72              & \ul{0.76}         \\
                                 & QCL-MixNet (No Attention)  & 0.92                   & 0.83              & 0.65              & 0.70               \\  \cmidrule(l){2-6}
                                 & Diff                       & \textcolor{mygreen}{+0.02}                  & \textcolor{mygreen}{+0.02}             & \textcolor{mygreen}{+0.09}             & \textcolor{mygreen}{+0.08}              \\ \midrule
\multirow{5}{*}{pen\_digits} & \textbf{QCL-MixNet (Full)} & \textbf{1.00} & \textbf{1.00} & \textbf{1.00} & \textbf{1.00} \\
                                 & QCL-MixNet (No Quantum)    & \textbf{1.00}          & \textbf{1.00}     & \textbf{1.00}     & \textbf{1.00}      \\
                                 & QCL-MixNet (No Mixup)      & \textbf{1.00}          & \textbf{1.00}     & \textbf{1.00}     & \textbf{1.00}      \\
                                 & QCL-MixNet (No Attention)  & \textbf{1.00}          & \ul{0.99}        & \textbf{1.00}     & \ul{0.99}         \\  \cmidrule(l){2-6}
                                 & Diff                       & \textcolor{blue}{0.00}                   & \textcolor{mygreen}{+0.01}             & \textcolor{blue}{0.00}              & \textcolor{mygreen}{+0.01}              \\ \midrule
\multirow{5}{*}{abalone}         & \textbf{QCL-MixNet (Full)} & \textbf{0.91}          & \textbf{0.45}     & \textbf{0.50}     & \textbf{0.48}      \\
                                 & QCL-MixNet (No Quantum)    & \textbf{0.91}          & \textbf{0.45}     & \textbf{0.50}     & \textbf{0.48}      \\
                                 & QCL-MixNet (No Mixup)      & \textbf{0.91}          & \textbf{0.45}     & \textbf{0.50}     & \textbf{0.48}      \\
                                 & QCL-MixNet (No Attention)  & \textbf{0.91}          & \textbf{0.45}     & \textbf{0.50}     & \textbf{0.48}      \\  \cmidrule(l){2-6}
                                 & Diff                       & \textcolor{blue}{0.00}                   & \textcolor{blue}{0.00}              & \textcolor{blue}{0.00}              & \textcolor{blue}{0.00}               \\ \midrule
\multirow{5}{*}{isolet}          & \textbf{QCL-MixNet (Full)} & \textbf{0.99}          & \textbf{0.96}     & 0.94              & \textbf{0.95}      \\
                                 & QCL-MixNet (No Quantum)    & \textbf{0.99}          & \ul{0.95}        & \textbf{0.96}     & \textbf{0.95}      \\
                                 & QCL-MixNet (No Mixup)      & \ul{0.98}             & 0.93              & \ul{0.95}        & \ul{0.94}         \\
                                 & QCL-MixNet (No Attention)  & \textbf{0.99}          & 0.94              & \textbf{0.96}     & \textbf{0.95}      \\ \cmidrule(l){2-6}
                                 & Diff                       & \textcolor{blue}{0.00}                   & \textcolor{mygreen}{+0.01}             & \textcolor{myred}{-0.02}             & \textcolor{blue}{0.00}               \\ \midrule
\multirow{5}{*}{arrhythmia}      & \textbf{QCL-MixNet (Full)} & \ul{0.91}             & \textbf{0.62}     & \textbf{0.67}     & \textbf{0.64}      \\
                                 & QCL-MixNet (No Quantum)    & \textbf{0.95}          & \ul{0.47}        & \ul{0.50}        & \ul{0.49}         \\
                                 & QCL-MixNet (No Mixup)      & \textbf{0.95}          & \ul{0.47}        & \ul{0.50}        & \ul{0.49}         \\
                                 & QCL-MixNet (No Attention)  & \textbf{0.95}          & \ul{0.47}        & \ul{0.50}        & \ul{0.49}         \\ \cmidrule(l){2-6} 
                                 & Diff                       & \textcolor{myred}{-0.04}                  & \textcolor{mygreen}{+0.15}             & \textcolor{mygreen}{+0.17}             & \textcolor{mygreen}{+0.15}              \\ \bottomrule[1.5pt]
\end{tabular}%
\end{sc}
}
\end{table}

\begin{figure*}[!ht]
\centering
\begin{subfigure}[b]{0.494\linewidth} \centering
    \includegraphics[width=\linewidth]{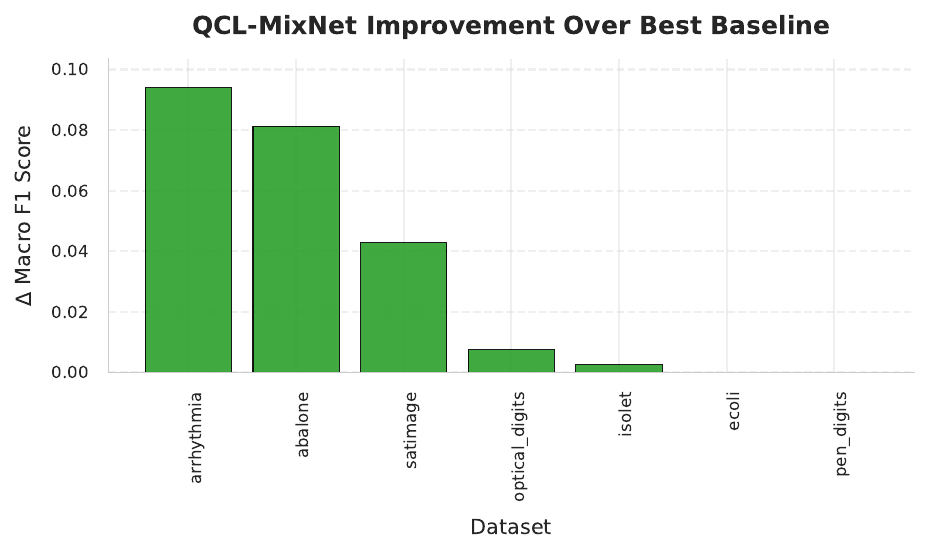}
    \caption{Improvement of QCL-MixNet over the best performing baseline on 7 binary datasets}
    \label{fig:improvement-binary}
\end{subfigure}
\hfill
\begin{subfigure}[b]{0.5\linewidth} \centering
    \includegraphics[width=0.95\linewidth]{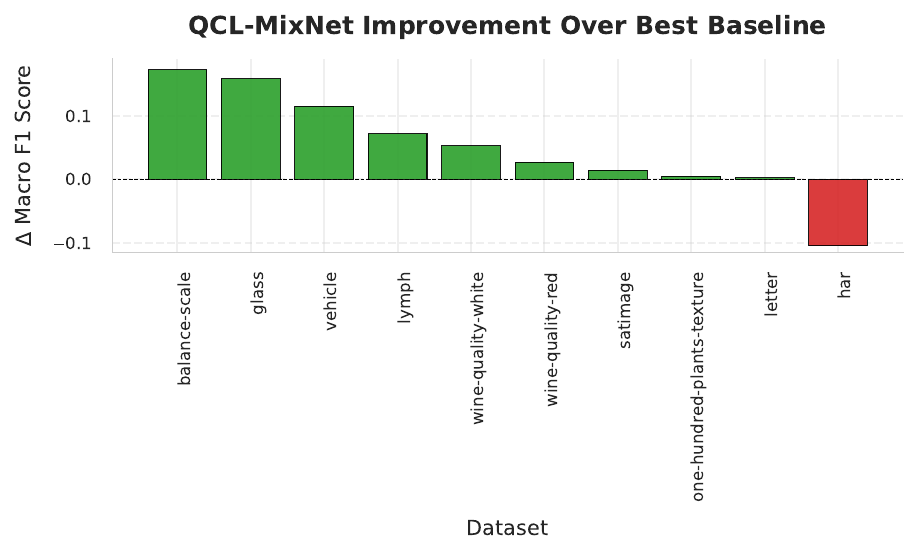}
   \caption{Improvement of QCL-MixNet over the best performing baseline on 11 multiclass datasets}
    \label{fig:improvement-multiclass}
\end{subfigure}
\caption{maF1-score improvement of QCL-MixNet over the best performing baseline on binary and multiclass datasets.}
\label{fig:improvement_overall}
\end{figure*}

\begin{figure*}[!ht]
\centering
\begin{subfigure}[b]{0.49\linewidth} \centering
    \includegraphics[width=\linewidth]{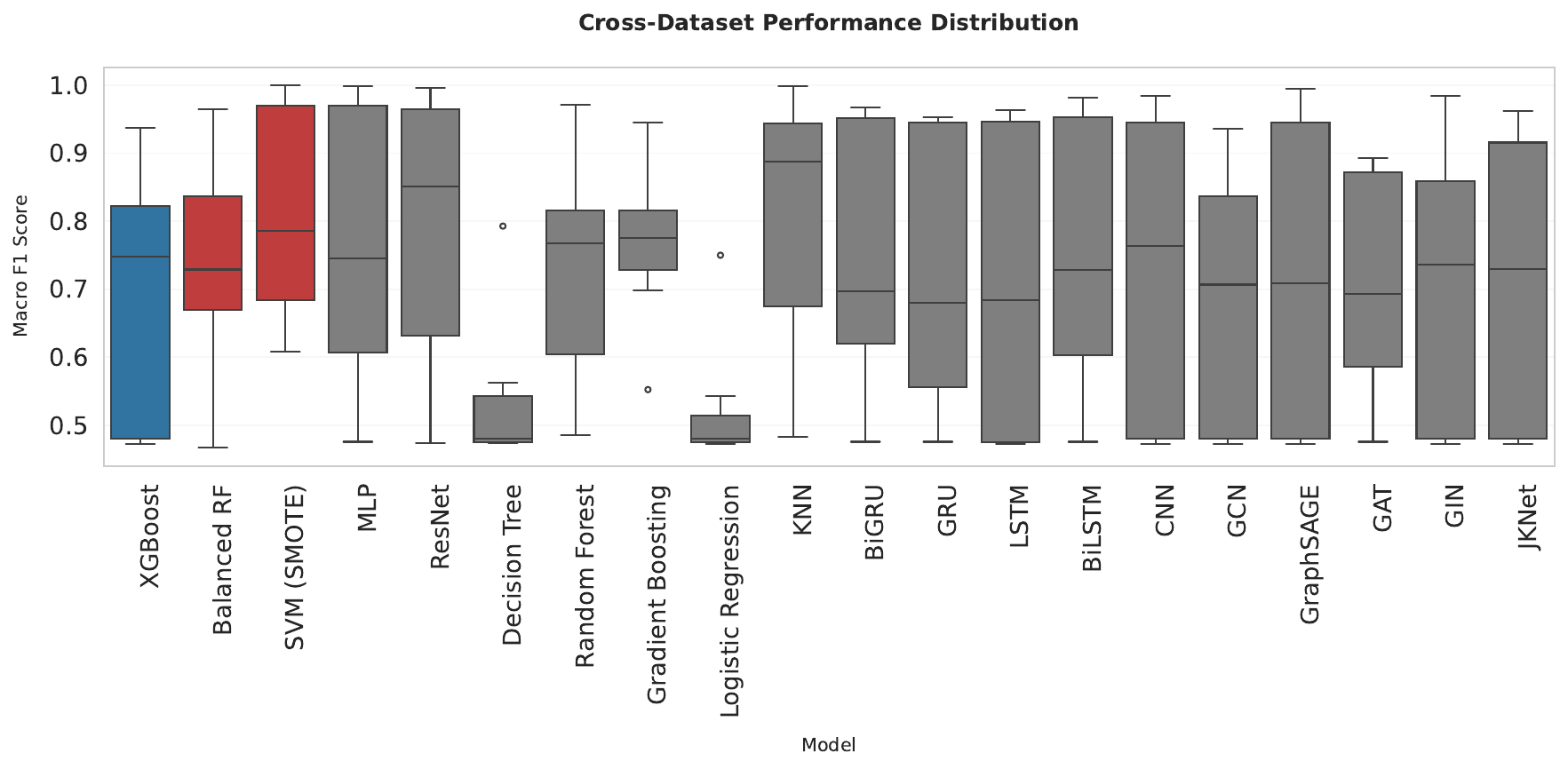}
    \caption{Performance distribution on binary datasets}
    \label{fig:cross-binary}
\end{subfigure}
\hfill
\begin{subfigure}[b]{0.49\linewidth} \centering
    \includegraphics[width=\linewidth]{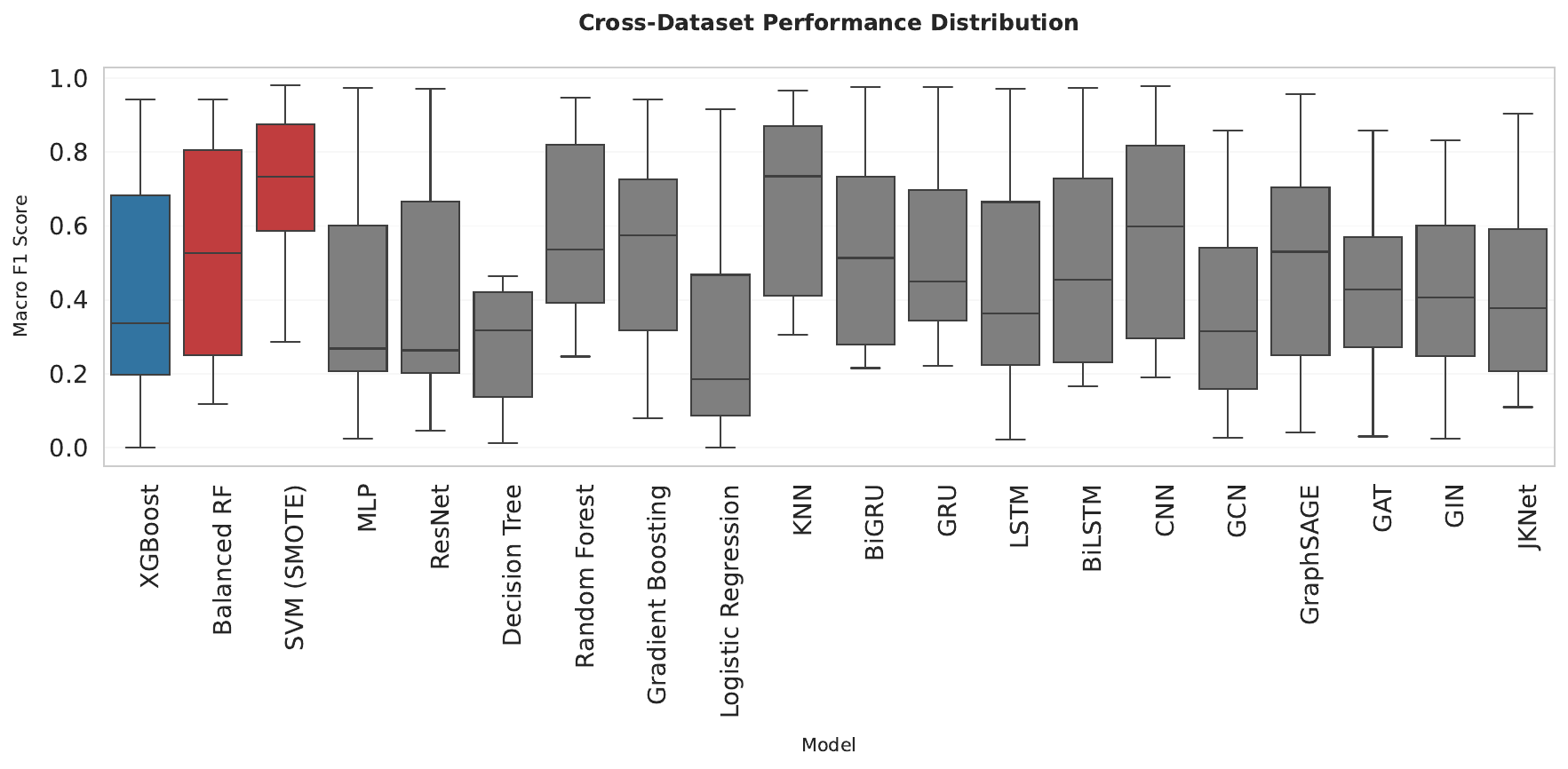}
   \caption{Performance distribution on multiclass datasets}
    \label{fig:cross-multi}
\end{subfigure}
\caption{Cross-dataset maF1 score distribution of various models on \textbf{(a)} 7 binary and \textbf{(b)} 11 multiclass datasets.}
\label{fig:cross_overall}
\end{figure*}

\begin{figure*}[!ht]
\centering
\begin{subfigure}[b]{0.49\textwidth}
    \includegraphics[width=\textwidth]{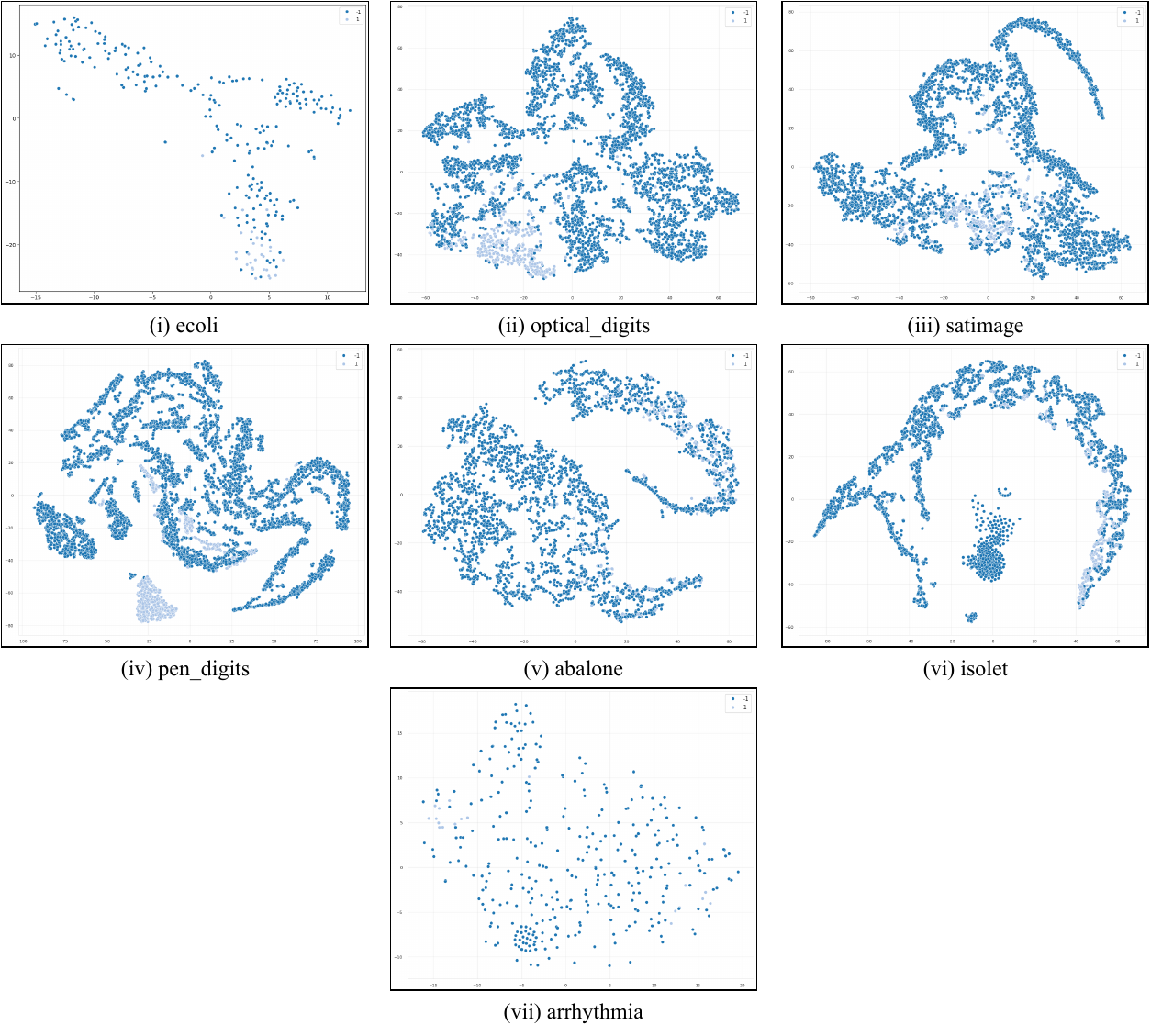}
    \caption{t-SNE embeddings on binary datasets}
    \label{tsne-binary}
\end{subfigure}
\hfill
\begin{subfigure}[b]{0.49\textwidth}
    \includegraphics[width=\textwidth]{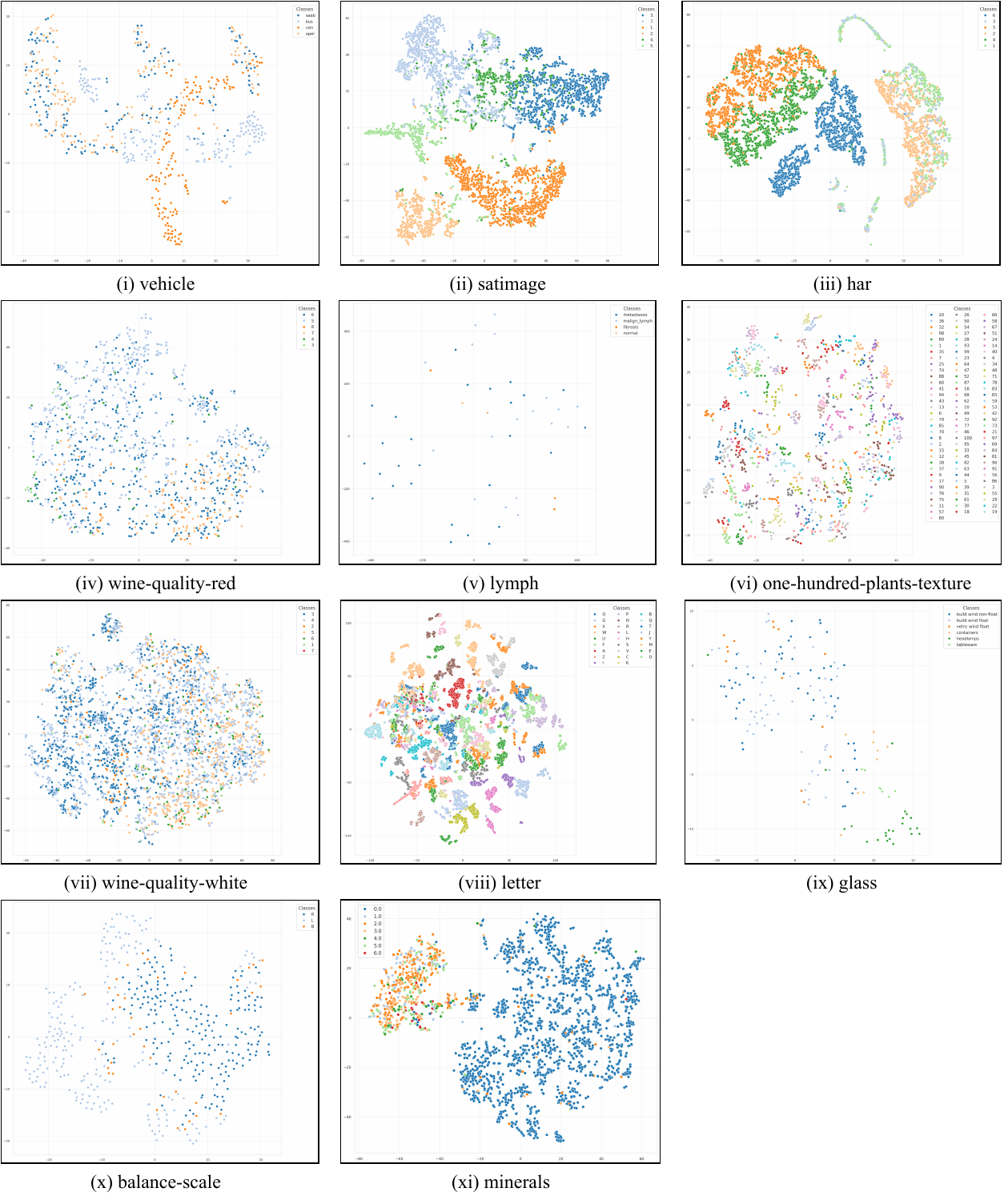}
    \caption{t-SNE embeddings on multiclass datasets}
    \label{tsne-multiclass}
\end{subfigure}
\caption{t-SNE plots showing prediction-space clustering of QCL-MixNet on binary and multiclass datasets. \textbf{(a)} presents t-SNE embeddings for binary classification tasks, while \textbf{(b)} shows t-SNE embeddings for multiclass classification tasks. Each plot reflects the distribution of model predictions in the latent space, highlighting the degree of class separation and cluster compactness achieved by QCL-MixNet.}
\label{fig:tsne_main}
\end{figure*}



\begin{figure*}[!ht]
\centering
\begin{subfigure}[b]{0.32\linewidth} \centering
    \includegraphics[width=\linewidth]{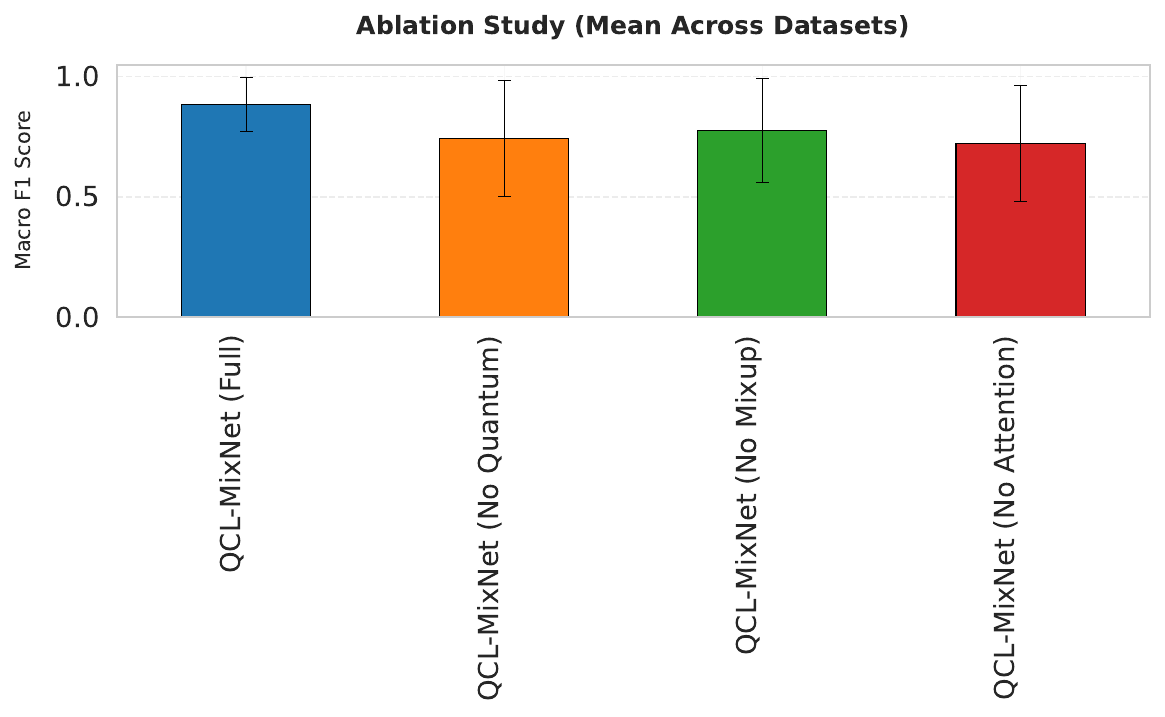}
    \caption{Mean across 7 binary datasets}
    \label{fig:ablation-binary}
\end{subfigure}
\hfill
\begin{subfigure}[b]{0.67\linewidth} \centering
    \includegraphics[width=0.48\linewidth]{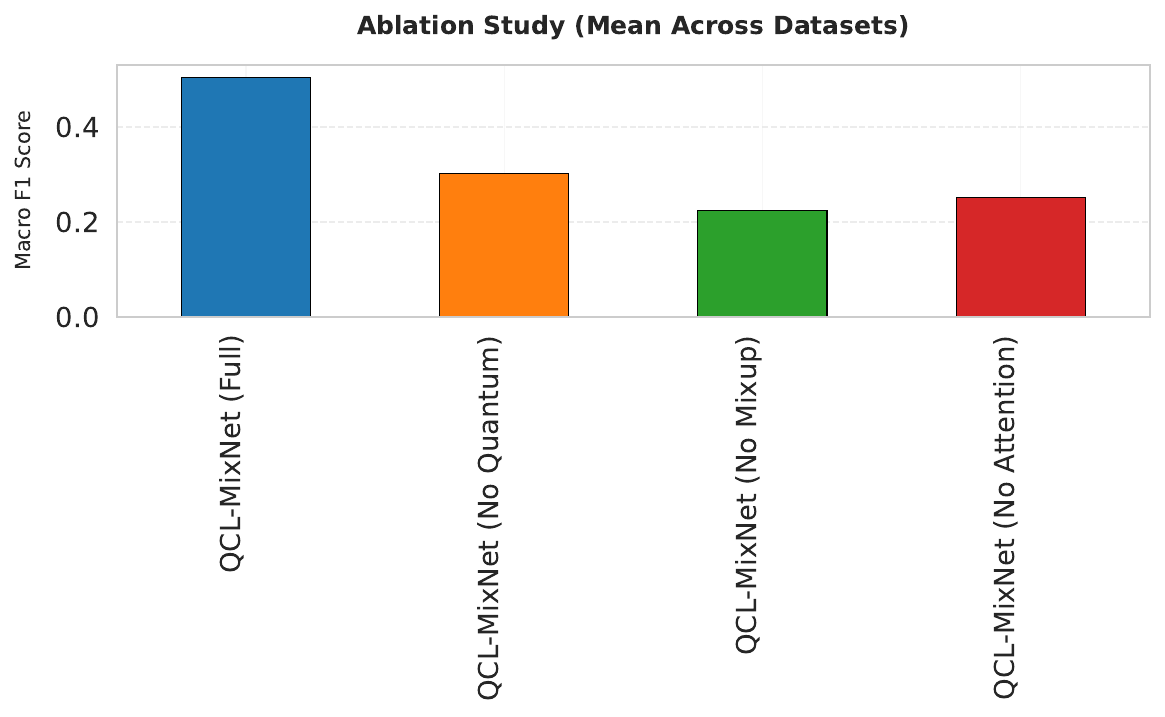}
    \includegraphics[width=0.48\linewidth]{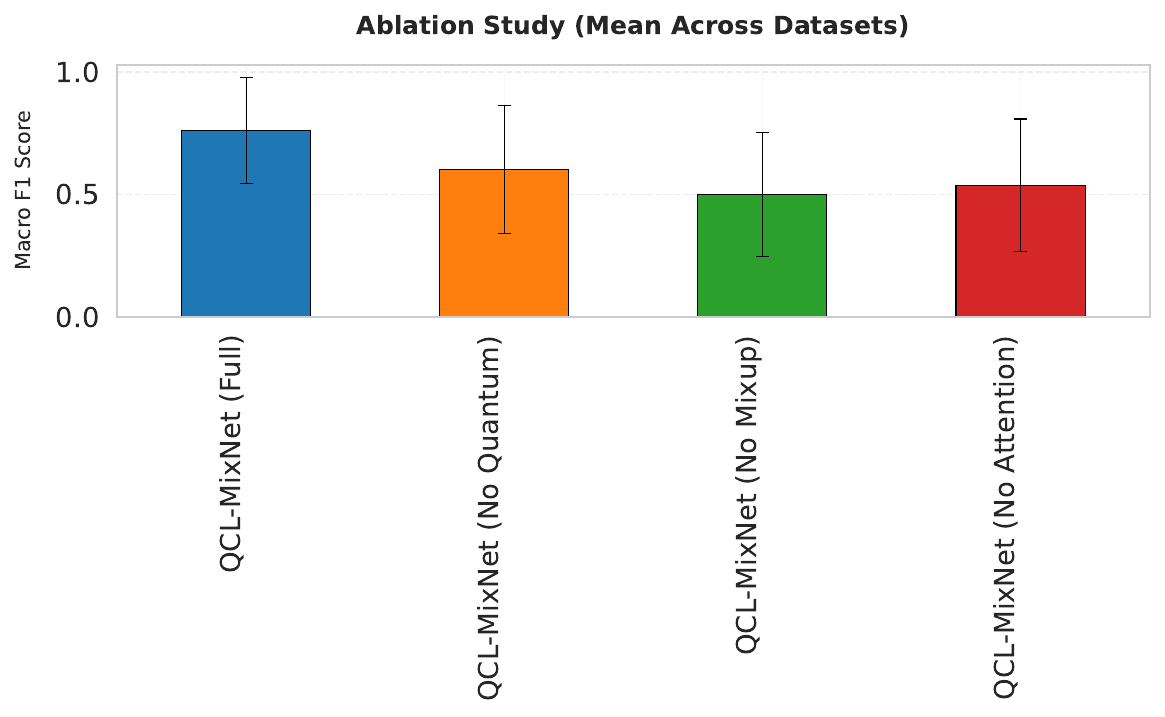}
   \caption{Ablation study on multiclass datasets (\textit{mineral} dataset on \textcolor{mygreen}{left} and the rest on \textcolor{myred}{right})}
    \label{fig:ablation-multi}
\end{subfigure}
\caption{Ablation study showing the impact of removing key components from QCL-MixNet. Performance is measured by maF1 score on \textbf{(a)} the mean of 7 binary datasets and \textbf{(b)} the single \textit{mineral} multiclass (\textcolor{mygreen}{left}) dataset and the mean of 10 other multiclass (\textcolor{myred}{right}) datasets. In all scenarios, the full model outperforms variants where the quantum-inspired embedding, mixup augmentation, or self-attention is removed. This confirms that each component contributes positively to the model's overall performance. \textit{Error bars} represent standard deviation across datasets.}
\label{fig:ablation_overall}
\end{figure*}



{\small\tabcolsep=5pt  
\footnotesize
\begin{sc}
\begin{longtable}{@{}cclcccc@{}}
\caption{Ablation study results for 11 multi-class tabular datasets. \textbf{Bold} indicates the best performance and {\ul{underline}} indicates the second best performance. `Diff' indicates the absolute improvement of QCL-MixNet over either the best-performing or the second-best-performing baseline model for each dataset, depending on which is more relevant in each scenario.}
\label{tab:ablation-multi}\\
\toprule[1.5pt]
\textbf{Dataset}  &  \textbf{Model}  & \textbf{Accuracy (\textcolor{mygreen}{$\uparrow$})} & \textbf{maP (\textcolor{mygreen}{$\uparrow$})} & \textbf{maR (\textcolor{mygreen}{$\uparrow$})} & \textbf{maF1 (\textcolor{mygreen}{$\uparrow$})} \\* \midrule[1pt]
\endfirsthead

\caption*{Table \ref{tab:ablation-multi} (Continued): Ablation study results for 11 multi-class tabular datasets.}\\
\toprule[1.5pt]
\textbf{Dataset} & \textbf{Model}    & \textbf{Accuracy (\textcolor{mygreen}{$\uparrow$})} & \textbf{maP (\textcolor{mygreen}{$\uparrow$})} & \textbf{maR (\textcolor{mygreen}{$\uparrow$})} & \textbf{maF1 (\textcolor{mygreen}{$\uparrow$})} \\* \midrule[1pt]
\endhead
\hline
\endfoot
%

\multirow{5}{*}{Mineral}  & \textbf{QCL-MixNet (Full)} & \textbf{0.87} & \textbf{0.47} & \textbf{0.47} & \textbf{0.43} \\
                          & QCL-MixNet (No Quantum)    & \ul{0.84}    & \ul{0.33}    & \ul{0.31}    & \ul{0.3}     \\
                          & QCL-MixNet (No Mixup)      & 0.83          & 0.19          & 0.25          & 0.21          \\
                          & QCL-MixNet (No Attention)  & 0.83          & 0.23          & 0.27          & 0.24          \\ \cmidrule(l){2-6} 
                          & Diff                       & \textcolor{mygreen}{+0.03}         & \textcolor{mygreen}{+0.14}         & \textcolor{mygreen}{+0.16}         & \textcolor{mygreen}{+0.13}         \\ \midrule
\multirow{5}{*}{vehicle}  & \textbf{QCL-MixNet (Full)} & \textbf{0.85} & \textbf{0.86} & \textbf{0.85} & \textbf{0.85} \\
                          & QCL-MixNet (No Quantum)    & \ul{0.76}    & \ul{0.76}    & \ul{0.76}    & \ul{0.75}    \\
                          & QCL-MixNet (No Mixup)      & 0.62          & 0.61          & 0.63          & 0.6           \\
                          & QCL-MixNet (No Attention)  & 0.67          & 0.66          & 0.67          & 0.66          \\ \cmidrule(l){2-6} 
                          & Diff                       & \textcolor{mygreen}{+0.09}         & \textcolor{mygreen}{+0.1}          & \textcolor{mygreen}{+0.09}         & \textcolor{mygreen}{+0.1}          \\ \midrule
\multirow{5}{*}{satimage} & \textbf{QCL-MixNet (Full)} & \textbf{0.91} & \textbf{0.9}  & \textbf{0.89} & \textbf{0.91} \\
                          & QCL-MixNet (No Quantum)    & \ul{0.88}    & \ul{0.86}    & \ul{0.85}    & \ul{0.88}    \\
                          & QCL-MixNet (No Mixup)      & 0.87          & 0.84          & 0.84          & 0.87          \\
                          & QCL-MixNet (No Attention)  & 0.86          & 0.82          & 0.81          & 0.85          \\ \cmidrule(l){2-6} 
                          & Diff                       & \textcolor{mygreen}{+0.03}         & \textcolor{mygreen}{+0.04}         & \textcolor{mygreen}{+0.04}         & \textcolor{mygreen}{+0.03}         \\ \midrule
\multirow{5}{*}{har}      & \textbf{QCL-MixNet (Full)} & \ul{0.89}    & \ul{0.92}    & \ul{0.89}    & \ul{0.88}    \\
                          & QCL-MixNet (No Quantum)    & \textbf{0.98} & \textbf{0.98} & \textbf{0.98} & \textbf{0.98} \\
                          & QCL-MixNet (No Mixup)      & 0.64          & 0.56          & 0.66          & 0.55          \\
                          & QCL-MixNet (No Attention)  & 0.66          & 0.79          & 0.69          & 0.6           \\ \cmidrule(l){2-6}  \cmidrule(l){2-6} 
                          & Diff                       & \textcolor{myred}{-0.09}         & \textcolor{myred}{-0.06}         & \textcolor{myred}{-0.09}         & \textcolor{myred}{-0.1}          \\ \midrule
\multirow{5}{*}{wine-quality-red}           & \textbf{QCL-MixNet (Full)} & \textbf{0.65}          & \textbf{0.33}     & \textbf{0.32}     & \textbf{0.32}      \\
                          & QCL-MixNet (No Quantum)    & \ul{0.61}    & 0.3           & \ul{0.29}    & \ul{0.29}    \\
                          & QCL-MixNet (No Mixup)      & 0.57          & \ul{0.31}    & 0.26          & 0.26          \\
                          & QCL-MixNet (No Attention)  & 0.56          & 0.26          & 0.24          & 0.24          \\ \cmidrule(l){2-6} 
                          & Diff                       & \textcolor{mygreen}{+0.04}         & \textcolor{mygreen}{+0.02}         & \textcolor{mygreen}{+0.03}         & \textcolor{mygreen}{+0.03}         \\ \midrule
\multirow{5}{*}{lymph}    & \textbf{QCL-MixNet (Full)} & \textbf{0.73} & \textbf{0.84} & \textbf{0.79} & \ul{0.79}    \\
                          & QCL-MixNet (No Quantum)    & \ul{0.57}    & 0.2           & 0.33          & 0.25          \\
                          & QCL-MixNet (No Mixup)      & \ul{0.57}    & 0.36          & \ul{0.35}    & 0.32          \\
                          & QCL-MixNet (No Attention)  & \textbf{0.73} & \ul{0.82}    & \textbf{0.79} & \textbf{0.8}  \\ \cmidrule(l){2-6} 
                          & Diff                       & \textcolor{blue}{0.00}          & \textcolor{mygreen}{+0.02}         & \textcolor{blue}{0.00}          & \textcolor{myred}{-0.01}         \\ \midrule
\multirow{5}{*}{one-hundred-plants-texture} & \textbf{QCL-MixNet (Full)} & \textbf{0.83}          & \textbf{0.87}     & \textbf{0.84}     & \textbf{0.83}      \\
                          & QCL-MixNet (No Quantum)    & \ul{0.51}    & \ul{0.48}    & \ul{0.51}    & \ul{0.44}    \\
                          & QCL-MixNet (No Mixup)      & 0.2           & 0.14          & 0.19          & 0.14          \\
                          & QCL-MixNet (No Attention)  & 0.3           & 0.26          & 0.3           & 0.24          \\ \cmidrule(l){2-6} 
                          & Diff                       & \textcolor{mygreen}{+0.32}         & \textcolor{mygreen}{+0.39}         & \textcolor{mygreen}{+0.33}         & \textcolor{mygreen}{+0.39}         \\ \midrule
\multirow{5}{*}{balance-scale}              & \textbf{QCL-MixNet (Full)} & \textbf{0.94}          & \textbf{0.96}     & \textbf{0.76}     & \textbf{0.8}       \\
                          & QCL-MixNet (No Quantum)    & \ul{0.89}    & \ul{0.59}    & \ul{0.64}    & \ul{0.62}    \\
                          & QCL-MixNet (No Mixup)      & 0.88          & \ul{0.59}    & \ul{0.64}    & 0.61          \\
                          & QCL-MixNet (No Attention)  & 0.88          & \ul{0.59}    & \ul{0.64}    & 0.61          \\ \cmidrule(l){2-6} 
                          & Diff                       & \textcolor{mygreen}{+0.05}         & \textcolor{mygreen}{+0.37}         & \textcolor{mygreen}{+0.12}         & \textcolor{mygreen}{+0.18}         \\ \midrule
\multirow{5}{*}{wine-quality-white}         & \textbf{QCL-MixNet (Full)} & \textbf{0.58}          & \textbf{0.39}     & \textbf{0.34}     & \textbf{0.35}      \\
                          & QCL-MixNet (No Quantum)    & \ul{0.55}    & \ul{0.3}     & \ul{0.26}    & \ul{0.26}    \\
                          & QCL-MixNet (No Mixup)      & \ul{0.55}    & 0.24          & 0.23          & 0.23          \\
                          & QCL-MixNet (No Attention)  & \ul{0.55}    & 0.24          & 0.23          & 0.23          \\ \cmidrule(l){2-6} 
                          & Diff                       & \textcolor{mygreen}{+0.03}         & \textcolor{mygreen}{+0.09}         & \textcolor{mygreen}{+0.08}         & \textcolor{mygreen}{+0.09}         \\ \midrule
\multirow{5}{*}{letter}   & \textbf{QCL-MixNet (Full)} & \textbf{0.95} & \textbf{0.95} & \textbf{0.95} & \textbf{0.95} \\
                          & QCL-MixNet (No Quantum)    & \ul{0.89}    & \ul{0.89}    & \ul{0.89}    & \ul{0.89}    \\
                          & QCL-MixNet (No Mixup)      & 0.88          & 0.88          & 0.88          & 0.88          \\
                          & QCL-MixNet (No Attention)  & 0.87          & 0.88          & 0.87          & 0.87          \\ \cmidrule(l){2-6} 
                          & Diff                       & \textcolor{mygreen}{+0.06}         & \textcolor{mygreen}{+0.06}         & \textcolor{mygreen}{+0.06}         & \textcolor{mygreen}{+0.06}         \\ \midrule
\multirow{5}{*}{glass}    & \textbf{QCL-MixNet (Full)} & \textbf{0.79} & \textbf{0.66} & \textbf{0.66} & \textbf{0.61} \\
                          & QCL-MixNet (No Quantum)    & \ul{0.6}     & 0.33          & 0.4           & 0.34          \\
                          & QCL-MixNet (No Mixup)      & 0.56          & \ul{0.51}    & \ul{0.42}    & \ul{0.39}    \\
                          & QCL-MixNet (No Attention)  & 0.42          & 0.22          & 0.31          & 0.25          \\ \cmidrule(l){2-6} 
                          & Diff                       & \textcolor{mygreen}{+0.19}         & \textcolor{mygreen}{+0.15}         & \textcolor{mygreen}{+0.24}         & \textcolor{mygreen}{+0.22}         \\ \bottomrule[1.5pt]
\end{longtable}%
\end{sc}
}

\section{Results and Discussion}
\label{sec:results}

\subsection{Performance Comparison}
Our proposed QCL-MixNet demonstrates consistently superior performance across a diverse set of 7 binary tabular datasets, as detailed in Table~\ref{tab:2}.Notably, on datasets like~\textit{arrhythmia},~\textit{pen\_digits},~\textit{satimage} and,~\textit{optical\_digits}, QCL-MixNet secures the best maF1 values. This improvement is visualized in Figure~\ref{fig:improvement-binary}, which shows the absolute gain in Macro F1 score compared to the best-performing baseline for each dataset. This strong performance, particularly in maF1, which is crucial for imbalanced scenarios. Traditional DL models, including advanced architectures like BiLSTMs or ResNets, often falter on imbalanced tabular data due to overfitting to the majority class or inadequate representation of minority classes. QCL-MixNet demonstrably overcomes these limitations in most cases. For instance, on~\textit{satimage}, QCL-MixNet achieves an maF1 of 0.84, surpassing the strongest competitors (second highest maF1: 0.82). Here, the `Diff' row, which quantifies QCL-MixNet's improvement over the best single model from any other category, shows a +0.02 maF1 gain, highlighting its superior handling of minority classes. This is likely due to its dynamic mixup, which creates more informative synthetic samples, and its QI feature learning, which improves separability.

This robust performance extends to multi-class classification, where QCL-MixNet shows leading performance across 11 challenging tabular datasets (see Table~\ref{tab:3}), frequently outperforming all other model categories (ML, DL, and GNNs). It frequently outperforms all other model categories (ML, DL, and GNNs). The performance gains on these multiclass datasets are summarized in Figure~\ref{fig:improvement-multiclass}, which shows that QCL-MixNet outperforms the top baseline on 8 of the 11 datasets. For instance, on highly imbalanced datasets like~\textit{vehicle},~\textit{wine-quality-red}, and~\textit{glass}, QCL-MixNet demonstrates substantial improvements in accuracy, with `Diff' values of +0.09, +0.04, and +0.09, respectively, over the best competing model from other categories. These results underscore the efficacy of its dynamic mixup strategy. This strategy is hypothesized to generate synthetic samples adaptively, particularly in underrepresented regions of the feature space defined by class imbalance and feature overlap. Such targeted augmentation is crucial for enabling better separation of multiple minority classes, a common challenge where standard oversampling or augmentation methods often falter in multi-class settings.

The results indicate that QCL-MixNet surpasses all other methods in accuracy across 14 out of 18 binary and multiclass datasets, and leads in maP (11 datasets), maR (9 datasets), and maF1 (12 datasets). While strong baselines like SVM (SMOTE) excel on specific datasets (e.g.,~\textit{one-hundred-plants-texture}), QCL-MixNet demonstrates superior consistency, significantly outperforming these methods across a wider range of datasets, especially those with severe class imbalances or complex structures (e.g.,~\textit{lymph},~\textit{wine-quality-white}). This highlights QCL-MixNet's robust and generalizable performance in imbalanced learning across diverse datasets, rather than focusing on dataset-specific efficiency, driven by its innovative integration of quantum-informed principles with dynamic mixup augmentation, which enables more effective regularization and feature learning for imbalanced tabular classification.

To further illustrate these findings, Figure~\ref{fig:improvement_overall} provides a direct, per-dataset comparison of QCL-MixNet against the strongest baseline. The green bars indicate an improvement in maF1, with particularly large gains on datasets like \textit{arrhythmia} and \textit{vehicle}. The few red bars in Figure~\ref{fig:improvement-multiclass} correspond to the handful of cases where a baseline model retained a slight edge. Complementing this, Figure~\ref{fig:cross_overall} contextualizes the overall performance landscape. The box plots show the distribution of maF1 scores for each model across all binary (Figure~\ref{fig:cross-binary}) and multiclass (Figure~\ref{fig:cross-multi}) datasets, respectively. These plots reveal the high variance and lower median performance of some traditional models, establishing the highly competitive nature of baselines like ResNet against which QCL-MixNet's improvements are measured. As illustrated in Figure~\ref{fig:tsne_main}, the t-SNE embeddings of QCL-MixNet’s prediction space reveal consistent and interpretable clustering behavior across both binary and multiclass classification tasks. In the binary datasets (Figure~\ref{tsne-binary}), the model exhibits distinct and compact clusters for the two classes, with minimal overlap, especially evident in datasets such as \textit{pen\_digits}, \textit{isolet}, and \textit{optical\_digits}, suggesting that QCL-MixNet effectively captures discriminative representations even in low-dimensional latent spaces. On the multiclass side (Figure~\ref{tsne-multiclass}), datasets like \textit{letter}, \textit{har}, and \textit{one-hundred-plants-texture} display well-separated groupings aligned with the underlying class labels, demonstrating the model’s capacity to preserve inter-class structure. Notably, even for challenging datasets such as \textit{abalone} and \textit{glass}, where classes are inherently less separable, the embeddings retain a coherent spatial distribution.

\subsection{Ablation Studies}
To dissect the contributions of key architectural modules (QE Layers, Dynamic Mixup, and Attention), we conducted systematic ablation experiments. Figure~\ref{fig:ablation_overall} provides a high-level visual summary of these findings by plotting the mean performance across datasets. Detailed per-dataset results are shown in Table~\ref{tab:ablation-binary} for binary datasets and Table~\ref{tab:ablation-multi} for multi-class datasets. 
Among all components, the removal of Dynamic Mixup led to the most significant degradation, with an average maF1 drop of approximately 24.5\% across the 18 datasets. The absence of mixup severely hampers generalization in datasets with either many classes or high intra-class variability (e.g.,~\textit{one-hundred-plants-texture},~\textit{glass}). Mixup appears to lessen overfitting and reduce decision boundary sharpness, promoting interpolation between samples, which is especially beneficial in low-data regimes such as~\textit{glass} with only 214 instances. The attention mechanism also proved highly influential, its exclusion resulting in an average maF1 decrease of about 23.2\%; its importance is highlighted on datasets with high feature dimensionality (e.g.,~\textit{har} with 562 features,~\textit{mineral} with 140 features). This suggests that the attention module is crucial for feature selection and localization, enabling the model to suppress irrelevant or noisy features, especially when feature redundancy is high or sample diversity is sparse. 

The QE Layers, while having a more varied impact across datasets, still demonstrated substantial overall contribution, with their removal causing an average maF1 reduction of roughly 11.5\%, most notably in high-imbalance or high-dimensional datasets (e.g.,~\textit{vehicle},~\textit{arrhythmia}). This supports the hypothesis that these QI layers introduce beneficial inductive biases that improve representation expressiveness, particularly under data sparsity. Moreover, QCL-MixNet exhibits high stability across varying data sizes, from compact datasets (e.g.,~\textit{ecoli} with 336 instances) to large ones (e.g.,~\textit{letter} with 20,000 instances), indicating architectural scalability. Ultimately, the consistent superiority of the full QCL-MixNet model over its ablated variants emphasizes that its components—improved representation learning, strategic data augmentation, and refined feature weighting—work synergistically to drive robust performance. This synergistic effect is visualized in Figure~\ref{fig:ablation_overall}, where the blue bar representing the full model consistently stands taller than the bars for any of the ablated versions across all test scenarios.

\subsection{Statistical Comparison Using the Friedman Test}
To statistically compare the performance of multiple classification models across multiple datasets, we employed the \textit{Friedman test}, a widely accepted non-parametric procedure suitable for multiple comparisons under repeated measures. This method is particularly suitable for evaluating algorithms across a common set of datasets, where the assumptions of parametric tests, such as normality and homoscedasticity, are unlikely to hold. To rigorously compare the performance of multiple classifiers across datasets and metrics, we applied the Friedman test in two complementary ways: (i) on a per-metric basis across datasets, and (ii) globally, averaging the ranks across metrics. These tests account for non-normality and repeated measures, providing a robust statistical basis for evaluating model differences.

Let us denote by $k$ the number of models being compared, and by $N$ the number of repeated measures, which can correspond to either the number of datasets (in the per-metric Friedman test) or the number of evaluation metrics (in the global Friedman test). For each repetition $i \in \{1, \dots, N\}$, the $k$ models are evaluated using a common criterion (e.g., classification accuracy or another performance measure). These scores are then converted into ranks $R_{ij} \in \{1, \dots, k\}$, where $R_{ij}$ denotes the rank of the $j$-th model under the $i$-th repetition. The best-performing model receives rank 1, the second-best receives rank 2, and so on. In the event of ties, average ranks are assigned to the tied models.

The null hypothesis ($H_0$) of the Friedman test posits that all models perform equally well in expectation, implying that their mean ranks are equal:
\begin{equation}
H_0: \mathbb{E}[R_{1}] = \mathbb{E}[R_{2}] = \dots = \mathbb{E}[R_{k}]
\end{equation}
To evaluate this hypothesis, the Friedman test statistic $\chi_F^2$ is computed as:
\begin{equation}\label{eq:friedman}
\chi_F^2 = \frac{12N}{k(k+1)} \sum_{j=1}^k \bar{R}_j^2 - 3N(k+1)
\end{equation}
where $\bar{R}_j = \frac{1}{N} \sum_{i=1}^N R_{ij}$ is the average rank of the $j$-th model across all $N$ repeated measures (either datasets or evaluation metrics). In settings where ties may occur, such as when ranking models on individual datasets, it is necessary to adjust for the reduced variance in ranks caused by tied values. A correction factor $c \in (0, 1]$ is introduced:
\begin{equation}
c = 1 - \frac{T}{Nk(k^2 - 1)}
\end{equation}
where $T$ is the total tie correction term, obtained by summing $t(t^2 - 1)$ over all tie groups of size $t$ across the repeated measures. The corrected test statistic is then:
\begin{equation}
\chi_{F,\text{ corrected}}^2 = \frac{\chi_F^2}{c}
\end{equation}

In our analysis, this correction was applied only in the \textit{per-metric Friedman tests}, where ties were possible within individual datasets. For the \textit{global Friedman test} based on aggregated average ranks across metrics, no correction was necessary. Assuming a sufficiently large number of repetitions ($N > 10$), the Friedman statistic (corrected or uncorrected) asymptotically follows a $\chi^2$ distribution with $k-1$ degrees of freedom ($df$). A small $p$-value indicates that the $H_0$ can be rejected, suggesting statistically significant differences in model performance.

\begin{table*}[hb!]
\caption{Average Ranks of Models Across All Datasets and Friedman Test Outcomes for Each Evaluation Metric}
\label{tab:stat-test}
\resizebox{\textwidth}{!}{%
\begin{tabular}{@{}lcccccccccccccccccccccccc@{}}
\toprule[1.5pt]
\textbf{Metrics} &
  \textbf{XGBoost} &
  \textbf{Balanced RF} &
  \textbf{SVM (SMOTE)} &
  \textbf{MLP} &
  \textbf{ResNet} &
  \textbf{DT} &
  \textbf{RF} &
  \textbf{GB} &
  \textbf{LR} &
  \textbf{KNN} &
  \textbf{BiGRU} &
  \textbf{GRU} &
  \textbf{LSTM} &
  \textbf{BiLSTM} &
  \textbf{CNN} &
  \textbf{GCN} &
  \textbf{GraphSAGE} &
  \textbf{GAT} &
  \textbf{GIN} &
  \textbf{JKNet} &
  \textbf{QCL-MixNet} &
  \textbf{Friedman Statistic ($\chi^2$)} &
  \textbf{p-value} &
  \textbf{Decision on  $H_0$} \\ \midrule[1pt]
Accuracy &
  9.9 &
  14 &
  8.2 &
  11.6 &
  10.4 &
  15.6 &
  7.1 &
  9.6 &
  15.5 &
  6.9 &
  9 &
  10.8 &
  12.2 &
  10 &
  7.4 &
  15.9 &
  9.1 &
  14.8 &
  13.9 &
  14.1 &
  3.8 &
  99.42 &
  1.60$\times 10^{-12}$ &
  Reject \\
maR &
  11.7 &
  8.4 &
  2.8 &
  10.9 &
  10.9 &
  16 &
  7.9 &
  9.2 &
  17.5 &
  5.9 &
  10.3 &
  10.4 &
  13.6 &
  10.9 &
  9.7 &
  16.7 &
  10.6 &
  14.2 &
  14.9 &
  14.9 &
  3.6 &
  156.58 &
  3.43$\times 10^{-23}$ &
  Reject \\
maP &
  12.7 &
  10.1 &
  3.6 &
  11.2 &
  10.6 &
  16.8 &
  6.5 &
  8.5 &
  17.4 &
  5.1 &
  8.5 &
  10.1 &
  12.8 &
  10.4 &
  9.1 &
  15.9 &
  12.1 &
  14.5 &
  15.5 &
  15.9 &
  3.6 &
  174.84 &
  9.91$\times 10^{-27}$ &
  Reject \\
maF1 &
  11.5 &
  10 &
  2.8 &
  11.1 &
  10.6 &
  16 &
  7.2 &
  8.6 &
  17.8 &
  4.9 &
  10.3 &
  10.4 &
  13.7 &
  11 &
  8.8 &
  17.5 &
  11.2 &
  14.8 &
  14.6 &
  14.8 &
  3.2 &
  174.91 &
  9.57$\times 10^{-27}$ &
  Reject \\ \midrule
Average Rank &
  11.45 &
  10.625 &
  4.35 &
  11.2 &
  10.625 &
  15.85 &
  7.175 &
  9.975 &
  17.05 &
  5.7 &
  9.525 &
  10.425 &
  13.075 &
  10.575 &
  8.75 &
  16.5 &
  10.75 &
  14.575 &
  14.725 &
  14.925 &
  \textbf{3.55} &
  29.68 & 
  1.44$\times 10^{-2}$ &
  Reject \\ \bottomrule[1.5pt]
\end{tabular}%
}
\end{table*}

\subsubsection{Per-Metric Friedman Tests Across Datasets}
To assess whether the observed performance differences among the $k=21$ classification models across the $n = 18$ datasets are statistically significant for each evaluation metric, we conducted individual Friedman tests per metric. The test computes ranks for each model within each dataset, then evaluates whether the mean ranks differ significantly under the $H_0$ that all models perform equally well. Table~\ref{tab:stat-test} presents the average ranks of all models along with the Friedman test outcomes. In all cases, the Friedman test strongly rejects $H_0$ ($p \ll 0.05$), indicating statistically significant differences in performance rankings across models. Among the models, the SVM (SMOTE) variant achieved the best (lowest) average rank in maR (2.8) and maF1 (2.8), while the proposed QCL-MixNet achieved the best performance in accuracy (3.8) and maP (3.6). To resolve the ambiguity arising from multiple models excelling in different metrics, we performed a global Friedman test over the average ranks aggregated across all four metrics. This allows us to assess the models’ overall consistency and performance in a unified statistical framework.

\subsubsection{Global Friedman Test on Aggregated Ranks Across Metrics}
To complement the per-metric Friedman tests, we also performed a global Friedman test (bottom row of Table~\ref{tab:stat-test}) on the average ranks of each model, aggregated across the four evaluation metrics. This approach offers a more consolidated view of model performance by considering multiple evaluation dimensions simultaneously. Let $k = 21$ be the number of classifiers and $N = 4$ be the number of metrics used as repeated measures. The average ranks $\bar{R}_j$ for each model $j \in {1, \dots, k}$ were computed by averaging its rank across the four metrics. Plugging in the values in Equation~\ref{eq:friedman}, we obtained: $\chi_F^2 = \frac{12 \times 4}{21 \times 22} \sum_{j=1}^{21} \bar{R}_j^2 - 3 \times 4 \times 22 = 29.68$. Since $N$ is small, we applied the Iman-Davenport correction to obtain an $F$-distributed statistic: $F_F = \frac{(N-1)\chi_F^2}{N(k-1) - \chi_F^2} = \frac{(4 - 1) \times 29.68}{4 \times (21 - 1) - 29.68} = \frac{89.04}{50.32} \approx 1.77$. The $df$ for this test are: $df_1 = k - 1 = 20$ and $df_2 = (k - 1)(N - 1) = 20 \times 3 = 60$. The critical value at $\alpha = 0.05$ from the $F$-distribution is $F_{0.05}(20, 60) \approx 1.748$. Since $F_F = 1.77 > 1.748$, we reject the $H_0$, which posits that all models perform equally in expectation. This result confirms that the observed rank differences across models are statistically significant even when aggregating across multiple evaluation dimensions. QCL-MixNet, with the lowest global mean rank of 3.55, again emerged as the top-performing model.





\section{Conclusion}
\label{sec:conclude}
In this paper, we introduced QCL-MixNet, a novel framework designed to address the critical challenge of class imbalance in tabular data. Our experiments on 18 binary and multi-class datasets with different imbalance ratios demonstrate that QCL-MixNet consistently outperforms leading ML, DL, and GNN models. This superiority is validated by the Friedman test. Furthermore, Ablation studies show that the complete QCL-MixNet consistently outperforms its ablated variants. The key to QCL-MixNet’s effectiveness, especially in handling minority classes, is its ability to create a more separable feature space. Its QI layers use structured nonlinear transformations to capture richer feature interactions, helping to differentiate rare instances that are often dominated by the majority class. This enriched feature representation, combined with a targeted augmentation strategy that generates realistic synthetic samples, enables the model to learn more accurate decision boundaries. The success of our method in improving the identification of minority classes has direct implications for real-world expert systems in domains like medical diagnosis, fraud detection, and industrial fault prediction, where the cost of missing a rare event is extremely high. By improving the identification of minority classes, our model can help build more reliable and equitable AI-driven decision-making tools. 

While our QCL-MixNet shows promising results, there are a few areas for future improvement. First, our kNN-guided dynamic mixup, though effective in generating meaningful augmentations, requires additional computation to find neighbors, and its performance can be sensitive to the number of \textit{k-neighbors}. Future work could explore efficient approximate kNN algorithms or develop automated, perhaps learnable, strategies for optimizing the mixup process, including neighbor selection and interpolation ratios. Second, while our current work successfully integrates quantum-inspired DL components to enhance feature representation, a more advanced future direction, which was beyond the scope of this particular study, could be the exploration of true quantum-classical hybrid architectures~\citep{jahin_qamplifynet_2023}. Such systems might integrate actual quantum computational elements (e.g., variational quantum circuits for feature mapping) with classical networks, which is a promising direction for future research. Lastly, the sophisticated hybrid loss function, while powerful, involves several interacting hyperparameters (e.g., component weights, temperature, margins), which can complicate model tuning. Developing automated techniques to find the best settings for these would make the model easier to apply widely.

\section*{Data Availability Statement}
All 18 datasets used in this study are publicly available and were obtained from the UCI Machine Learning Repository, OpenML, and Kaggle. Detailed information and links are provided in Table~\ref{tab:1}.

\section*{Funding Sources}
This research did not receive any specific grant from funding agencies in the public, commercial, or not-for-profit sectors.



\printcredits

\bibliographystyle{apalike}
\bibliography{main}





\end{document}